\documentclass[10pt,journal,compsoc]{IEEEtran}

\ifCLASSOPTIONcompsoc
  \usepackage[nocompress]{cite}
\else
  \usepackage{cite}
\fi

\usepackage{amsmath,amsfonts,bm}

\def\eqref#1{equation~\ref{#1}}

\def\1{\bm{1}}

\DeclareMathAlphabet{\mathsfit}{\encodingdefault}{\sfdefault}{m}{sl}
\SetMathAlphabet{\mathsfit}{bold}{\encodingdefault}{\sfdefault}{bx}{n}

\DeclareMathOperator*{\argmin}{arg\,min}

\usepackage{hyperref}
\usepackage{url}

\usepackage{microtype}
\usepackage{graphicx}
\usepackage{subcaption}
\usepackage{booktabs}
\usepackage{hyperref}

\usepackage{amsmath}
\usepackage{amssymb}
\usepackage{mathtools}
\usepackage{amsthm}
\usepackage{enumitem}
\usepackage{pifont}

\usepackage[table]{xcolor}

\usepackage[capitalize,noabbrev]{cleveref}
\usepackage{multirow}

\theoremstyle{plain}
\newtheorem{theorem}{Theorem}[section]
\newtheorem{proposition}[theorem]{Proposition}

\theoremstyle{definition}
\newtheorem{definition}[theorem]{Definition}
\newtheorem{assumption}[theorem]{Assumption}
\theoremstyle{remark}
\newtheorem{remark}[theorem]{Remark}

\usepackage[textsize=tiny]{todonotes}

\begin{document}
%
\title{\textbf{LA}tent \textbf{P}hase \textbf{I}nference from \textbf{S}hort\\ time sequences using \textbf{SH}allow \textbf{RE}current \textbf{D}ecoders (LAPIS-SHRED)}

\author{Yuxuan~Bao,
        Xingyue~Zhang,
        and~J.~Nathan~Kutz
\IEEEcompsocitemizethanks{
\IEEEcompsocthanksitem Y. Bao is with the Department of Applied Mathematics, University of Washington.\protect
\IEEEcompsocthanksitem X. Zhang is with the School of Environmental and Forest Sciences, University of Washington.\protect
\IEEEcompsocthanksitem J. N. Kutz is with Autodesk Research, London, UK. \protect
\IEEEcompsocthanksitem Corresponding authors: baoyx@uw.edu, kutz@uw.edu.}
}

\markboth{}%
{Shell \MakeLowercase{\textit{et al.}}: Bare Advanced Demo of IEEEtran.cls for IEEE Computer Society Journals}

\IEEEtitleabstractindextext{%
\begin{abstract}
Reconstructing full spatio-temporal dynamics from sparse observations in both space and time remains a central challenge in complex systems, as measurements can be spatially incomplete and can be also limited to narrow temporal windows.  Yet approximating the complete spatio-temporal trajectory is essential for mechanistic insight and understanding, model calibration, and operational decision-making. We introduce LAPIS-SHRED (\textbf{LA}tent \textbf{P}hase \textbf{I}nference from \textbf{S}hort time sequence using \textbf{SH}allow \textbf{RE}current \textbf{D}ecoders), a modular architecture that reconstructs and/or forecasts complete spatiotemporal dynamics from sparse sensor observations confined to short temporal windows. LAPIS-SHRED operates through a three-stage pipeline: (i) a SHRED model is pre-trained \emph{entirely on simulation data} to map sensor time-histories into a structured latent space, (ii) a temporal sequence model, trained on simulation-derived latent trajectories, learns to propagate latent states forward or backward in time to span unobserved temporal regions from short observational time windows, and (iii) at deployment, only a short observation window of hyper-sparse sensor measurements from the true system is provided, from which the frozen SHRED model and the temporal model jointly reconstruct or forecast the complete spatiotemporal trajectory. The framework supports bidirectional (in time) inference, inherits data assimilation and multiscale reconstruction capabilities from its modular structure, and accommodates extreme observational constraints including single-frame terminal inputs. We evaluate LAPIS-SHRED on six experiments spanning complex spatio-temporal physics: turbulent flows, multiscale propulsion physics, volatile combustion transients, and satellite-derived environmental fields. Across all settings, LAPIS-SHRED achieves consistently low NRMSE using as few as 3 sensors and observation windows as short as 7\% or less of the temporal domain, closely approaching baselines that observe the entire temporal record. These results establish LAPIS-SHRED as a latent phase inference framework applicable to a broad class of dynamic systems where simulation training data is available but dense temporal observations at deployment are not---highlighting a lightweight, modular architecture suited to rapid post-event reconstruction from terminal snapshots, resource-constrained forecasting, and operational settings where observation windows are constrained by physical or logistical limitations.
\end{abstract}

\begin{IEEEkeywords}
physics informed AI, data-driven methods, temporal inference, spatiotemporal reconstruction, latent-space modeling, sparse sensing, chaotic systems, multiscale physics, geophysical fields.
\end{IEEEkeywords}}

\maketitle

\IEEEdisplaynontitleabstractindextext

\IEEEpeerreviewmaketitle

\section{Introduction}
\label{sec:intro}

Reconstructing the complex spatiotemporal evolution patterns in physical systems from limited observational data is a pervasive challenge across sciences and engineering~\cite{kutz2025accelerating,evensen2009data,tarantola2005inverse}. Physical systems---whether governed by continuum mechanics, stochastic kinetics, or coupled multiscale processes---frequently evolve through transient regimes that are inaccessible to direct measurement, either because the relevant scales are too fast, the environment too hazardous, or the instrumentation too sparse to capture the full state~\cite{brunton2024promising}. In many domains, observations are not merely spatially sparse but also \emph{temporally limited}: sensor measurements may be available only during a brief terminal window (e.g., post-event deposits in geophysics~\cite{iverson1997physics,hungr2014varnes}, post-mortem structural damage~\cite{farrar2012structural}), or only during a short initial window (e.g., early-phase combustion monitoring~\cite{raman2023nonidealities} or brief satellite revisit periods~\cite{weiss2020remote}). Recovering the complete temporal trajectory from such short observation windows---while using only a small number of spatial sensors---constitutes a severely data-constrained and ill-posed inference problem.

\begin{figure*}[t]
\centering
\includegraphics[width=\textwidth]{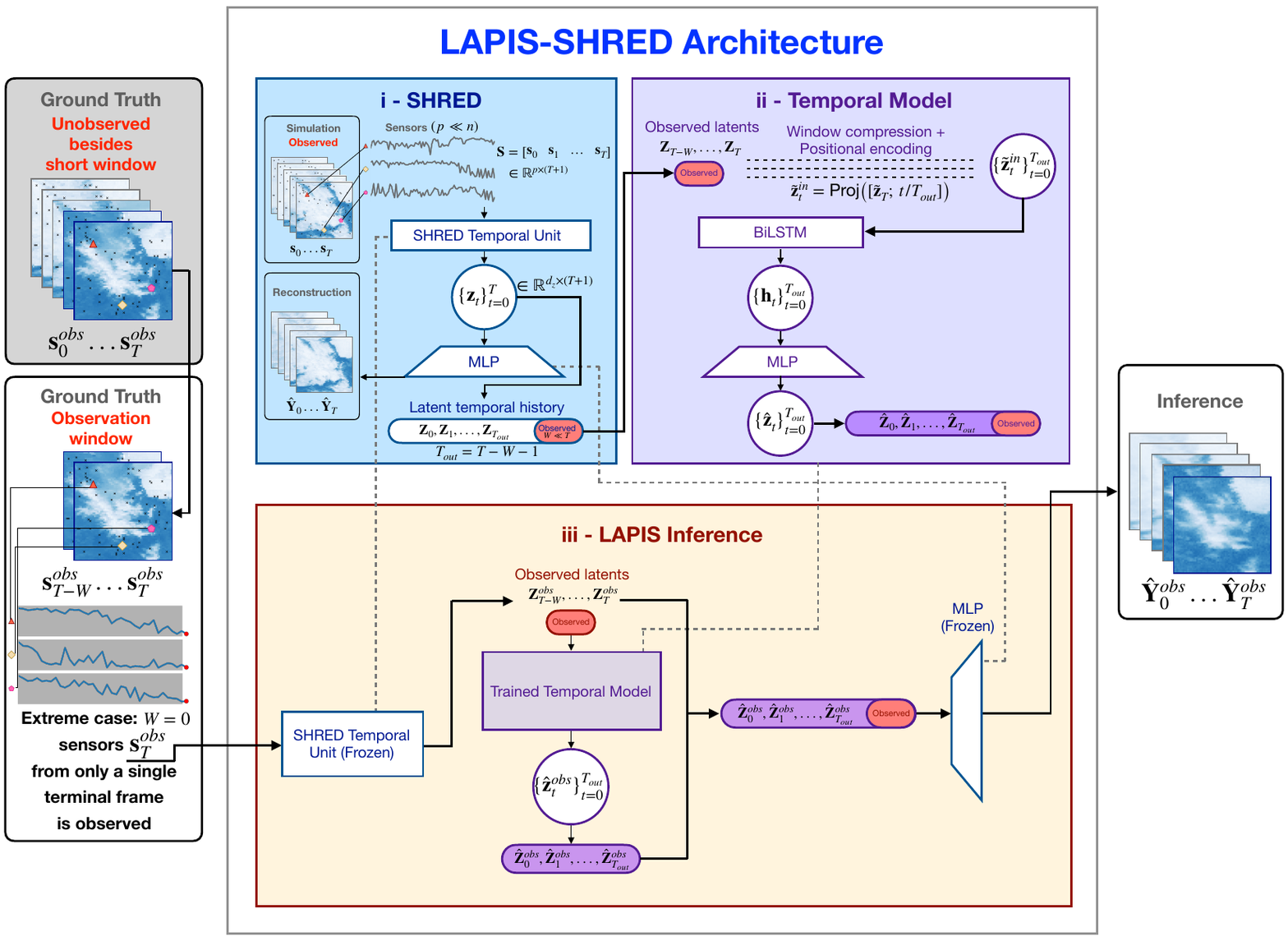}
\caption{Overview of the LAPIS-SHRED architecture. \textbf{Stage i (SHRED):} Sparse sensor observations $\mathbf{S} = [\mathbf{s}_0, \ldots, \mathbf{s}_T] \in \mathbb{R}^{p \times (T+1)}$ from simulation ensembles are processed via a pre-trained SHRED model into latent representations $\mathbf{z}_0, \ldots, \mathbf{z}_T$, which are decoded to reconstruct the full spatiotemporal trajectory. A short window of latent codes---from either the initial or terminal phase---is extracted for temporal model training. \textbf{Stage ii (Temporal Model):} Trained on simulation-derived latent trajectories, a BiLSTM architecture learns to map short-window latent codes (augmented with positional encodings) to the unobserved portion of the latent trajectory. \textbf{Stage iii (LAPIS Inference):} At test time, only the short observation window from the ground truth system (reality) is available. The frozen SHRED model extracts the observed-window latent codes, the temporal model infers the unobserved latent trajectory, and the frozen decoder reconstructs the complete spatiotemporal evolution. The architecture shown depicts backward inference from a terminal time window; forward inference from an initial time window proceeds analogously.}
\label{fig:lapis_architecture}
\end{figure*}

Existing approaches span a broad methodological landscape but are not designed for the proposed short-window observation regime. Classical data assimilation methods~\cite{evensen2009data,tarantola2005inverse,asch2016data} require dense or sequential temporal observations and struggle with highly nonlinear dynamics. Physics-informed neural networks (PINNs)~\cite{raissi2019physics,karniadakis2021physics,fan2025physics} embed governing equations into training losses but require explicit knowledge of the {\em partial differential equation} (PDE) form and sufficient constraints (like initial/boundary conditions and scattered observations) to anchor the solution, with performance degrading under extreme spatial and temporal sparsity. Neural and Fourier neural operator (FNO) frameworks~\cite{li2020fourier,lu2021learning,kovachki2023neural} learn mappings between function spaces and can generalize across families of PDEs and parameter regimes, yet typically require large paired training datasets spanning the full temporal domain, and face challenges with strongly multiscale dynamics and inverse problems where only partial observations are available.  Traditional reduced-order models (ROMs) via proper orthogonal decomposition (POD)~\cite{benner2015survey} or dynamic mode decomposition (DMD)~\cite{kutz2016dynamic} provide efficient low-dimensional representations, with the former method lacking assimilation capabilities and the latter requiring full time series observations.  Both POD and DMD based ROMs are not suited for the hyper-sparse observations (as low as three sensors) proposed in this work.  Additionally, methods from PINNs to FNOs have difficulty accurately modeling spatio-temporal dynamics~\cite{wyder2025common}.  As an alternative, the SHRED architecture ~\cite{williams2024sensing,gao2025sparse,tomasetto2025reduced,bao2025data,yermakov2025t} and its multiscale extensions~\cite{bao2026cheap2rich,zhang2026sendai} have shown that hyper-sparse sensor time-histories processed through temporal units and shallow decoders are effective ROMs for modeling high-dimensional spatial states by leveraging Takens' embedding theorem~\cite{takens2006detecting}.  Thus unlike most existing methods that assume access to sparse, but not hyper-sparse sensor observations spanning the full temporal extent of the trajectory, SHRED can be modified to accommodate the short temporal window observations of interest considered here.

In this work, we present LAPIS-SHRED (\textbf{LA}tent \textbf{P}hase \textbf{I}nference from \textbf{S}hort time sequence using \textbf{S}hallow \textbf{RE}current \textbf{D}ecoders (LAPIS-SHRED)), a modular deep learning architecture that reconstructs and/or forecasts full-state spatiotemporal dynamics from sparse sensor measurements confined to a short temporal window (See Fig.~\ref{fig:lapis_architecture}). The key insight is that a pre-trained SHRED model induces a structured latent space in which temporal dynamics are encoded as low-dimensional trajectories~\cite{gao2025sparse}. LAPIS-SHRED exploits this structure through a three-stage pipeline: (i)~a SHRED model is trained exclusively on simulation ensembles to learn a mapping from sparse sensor time-histories to a structured latent space; (ii)~a \emph{temporal dynamics model}---either sequence-to-sequence or autoregressive---is trained on the resulting simulation-derived latent trajectories to propagate latent states forward or backward in time. In the deployment phase: (iii)~only a short observation window of hyper-sparse sensor measurements from the true system is provided; the frozen SHRED model encodes these observations into the learned latent space, the temporal model infers the unobserved latent trajectory, and the frozen decoder reconstructs the complete spatiotemporal evolution. No ground truth beyond this short, sparse observation window is required at deployment. By operating entirely in the learned latent space. LAPIS-SHRED inherits SHRED's spatial reconstruction capabilities while adding temporal extrapolation capacity that the latent space alone does not possess. Figure~\ref{fig:lapis_architecture} illustrates the complete pipeline.
The principal contributions of this work are as follows:\\

\begin{enumerate}[nosep]
    \item \textit{Short-window temporal inference.} LAPIS-SHRED reconstructs full spatiotemporal trajectories from observation windows comprised of only 7--20\% of the temporal data, using as few as 3 spatial sensors, achieving NRMSE consistently below 5\% across diverse dynamical regimes---performance approaching that of SHRED baselines which have access to the complete temporal record.
    \item \textit{Bidirectional inference with structured modularity.} The framework supports both backward reconstruction from terminal windows and forward prediction from initial windows, and interfaces SHRED---frame-by-frame, Seq2Seq, or multiscale---without modification to the temporal inference stages.
    \item \textit{Single-snapshot encoding strategies.} In an extreme sparsity case when the accessible data are reduced to a single temporal snapshot at the terminal state, we introduce a padding mechanism that enables latent-space encoding from a single terminal observation frame, and a Seq2Seq temporal model with positional encoding that produces full latent trajectories in a single forward pass.\\
\end{enumerate}

\noindent We evaluate LAPIS-SHRED on six challenging multiscale experiments spanning proto-typical chaotic PDEs (2D Kuramoto--Sivashinsky, 2D Kolmogorov flow), turbulent flow (2D von Karman vortex street), multiscale propulsion physics (high-fidelity rotating detonation engine), volatile combustion transients (1D RDE ignition), and satellite-derived environmental fields (MODIS snow cover). We additionally benchmark against SHRED-ROM~\cite{tomasetto2025reduced} and discuss the applicability of neural operator methods including FNO~\cite{li2020fourier} and DeepONet~\cite{lu2021learning} in Section~\ref{sec:baseline_comparisons}.

\section{Preliminaries:  The SHRED Architecture}


\label{sec:shred_family}

The SHallow REcurrent Decoder (SHRED) architecture~\cite{williams2024sensing} addresses the task of reconstructing high-dimensional spatiotemporal states from sparse (hyper-sparse), pointwise sensor measurements. SHRED rests on three foundational principles: (i)~separation of variables, which decouples the spatial reconstruction from the temporal encoding; (ii)~Takens' embedding theorem~\cite{takens2006detecting} which guarantees that the time-history at a generic sensor location provides a diffeomorphic representation of the underlying attractor; and (iii)~a decoding-only strategy that avoids the ill-conditioning inherent in learning inverse operators through standard autoencoder pairs. The architecture consists of a temporal unit---typically a multi-layer LSTM~\cite{hochreiter1997long}---that maps a lagged window of sparse sensor observations ${\bf S}$ (See Fig.~\ref{fig:lapis_architecture}) to a low-dimensional latent code $\mathbf{z}_t \in \mathbb{R}^{d_z}$, followed by a shallow feedforward decoder that maps $\mathbf{z}_t$ to the full spatial state $\hat{\mathbf{Y}}_t \in \mathbb{R}^n$.  Training is performed on simulation data: sparse sensor measurements are extracted from the simulation fields via the sampling operator, and the temporal unit--decoder pair is optimized end-to-end to minimize a reconstruction loss---computed either over the full spatial state or only at the sensor locations, depending on the available supervision.

Several works have extended the SHRED framework to address specific challenges in scientific modeling. SINDy-SHRED~\cite{gao2025sparse} regularizes the latent space by embedding a Sparse Identification of Nonlinear Dynamics (SINDy)~\cite{brunton2016discovering} loss on the latent trajectories, encouraging the learned latent dynamics to conform to parsimonious nonlinear functional forms. This regularization improves noise robustness and interpretability. SHRED-ROM~\cite{tomasetto2025reduced} recasts the reconstruction in a reduced-order setting by predicting low-dimensional basis coefficients (e.g., POD) from sparse sensor time histories. T-SHRED~\cite{yermakov2025t} introduces a transformer-based model in place of the LSTM, capturing temporal dependencies through attention mechanisms.

Beyond reconstruction, several architectures address the simulation-to-reality (sim2real) gap through data assimilation. DA-SHRED~\cite{bao2025data} trains the SHRED model on simulation data and then updates the latent representations using real sensor observations, closing the discrepancy between the simulated and true system states. This is achieved on observed sensor signals without requiring access to the true full-state field. SINDy-based regression in the latent space is also incorporated to identify explicit functional forms corresponding to missing physics. Cheap2Rich~\cite{bao2026cheap2rich} extends this to multi-fidelity paradigm by decomposing the state reconstruction into a low-frequency (LF) pathway trained on a computationally inexpensive reduced-order model and a high-frequency (HF) pathway that captures unmodeled fine-scale physics. SENDAI~\cite{zhang2026sendai} further addresses multiscale systems through a hierarchical architecture with spectrally sparse corrections, enabling robust reconstruction under domain shift between simulation and reality. 




\subsection{Mathematical Formulation of SHRED}
\label{sec:shred_backbone}

The SHRED architecture (Section~\ref{sec:shred_family}) maps sparse sensor histories to full spatial states through a temporal unit and a shallow decoder. It can be deployed in two operational modes within LAPIS-SHRED.

\textbf{In the \emph{frame-by-frame} mode}, standard SHRED employs time-delay embedding with a sliding window of $\ell$ lagged observations to reconstruct individual spatial frames:
\begin{equation}
\hat{\mathbf{Y}}_t = \mathcal{D}_{\text{SHRED}}\!\left(\mathcal{E}_{\text{SHRED}}(\mathbf{s}_{t-\ell+1:t};\, \phi_{\text{temp}});\, \phi_{\text{dec}}\right),
\label{eq:frame_shred}
\end{equation}
where the temporal unit $\mathcal{E}_{\text{SHRED}}$ is a multi-layer LSTM that processes the lagged sensor window and produces a latent code $\mathbf{z}_t \in \mathbb{R}^{d_z}$, and the decoder $\mathcal{D}_{\text{SHRED}}$ is a multi-layer perceptron mapping the latent code to the spatial state. This frame-by-frame approach excels in chaotic systems where the maximal Lyapunov exponent $\lambda_{\max} > 0$ implies exponential sensitivity to initial conditions~\cite{lorenz2017deterministic, ott2002chaos}, and reconstruction benefits from \emph{localized} temporal windows that capture instantaneous attractor geometry~\cite{pathak2018model, vlachas2018data}.

\textbf{In the \emph{sequence-to-sequence} (Seq2Seq) mode}, the temporal unit processes the full sensor sequence and produces a latent trajectory:
\begin{equation}
\mathbf{Z} = \mathcal{E}_{\text{SHRED}}(\mathbf{S};\, \phi_{\text{temp}}) \in \mathbb{R}^{(T+1) \times d_z},
\label{eq:seq2seq_encoder}
\end{equation}
where $\mathbf{S} = [\mathbf{s}_0, \ldots, \mathbf{s}_{T}]^\top \in \mathbb{R}^{(T+1) \times p}$ is the sensor time-series and $d_z = 2d_h$ arises from the concatenation of forward and backward hidden states in a bidirectional LSTM. The decoder then maps the full latent trajectory to the full spatiotemporal state:
\begin{equation}
\hat{\mathbf{X}} = \mathcal{D}_{\text{SHRED}}(\mathbf{Z};\, \phi_{\text{dec}}) \in \mathbb{R}^{(T+1) \times n}.
\label{eq:seq2seq_decoder}
\end{equation}
The Seq2Seq formulation learns a single mapping $\mathbf{S} \mapsto \mathbf{X}$ that captures the joint distribution over full temporal trajectories, enforcing temporal coherence across the entire evolution. This global formulation is particularly advantageous for dissipative systems with invariant spatial structure, where long-range temporal correlations persist and should be explicitly captured.

\textbf{Training Loss.} SHRED is trained with mean squared error, optionally weighted to emphasize physically relevant spatial regions:
\begin{equation}
\mathcal{L}_{\text{SHRED}} = \frac{1}{T} \sum_{t=1}^{T} \sum_{i=1}^{n} w_i \left( \hat{Y}_{t,i} - Y_{t,i} \right)^2,
\label{eq:shred_loss}
\end{equation}
where $w_i$ may be set to emphasize active regions of the dynamics.

\subsection{Multi-Scale Architecture}
\label{sec:multiscale_backbone}
When deployed on a multi-scale architecture such as SENDAI or Cheap2Rich, the model decomposes the reconstruction into additive LF and HF components:
\begin{equation}
\tilde{\mathbf{u}}'(t) = \tilde{\mathbf{u}}_{\text{LF}}(t) + \tilde{\mathbf{u}}_{\text{HF}}(t),
\label{eq:lf_hf_decomp}
\end{equation}
where the LF pathway captures dominant dynamics learned from simulation with latent-space alignment via a GAN~\cite{goodfellow2014generative}, and the HF pathway resolves fine-scale corrections from sensor residuals. Each pathway produces its own latent code $\mathbf{z}_{\text{LF}}(t) \in \mathbb{R}^{d_z}$ and $\mathbf{z}_{\text{HF}}(t) \in \mathbb{R}^{d_z}$, along with corresponding decoded spatial fields that sum to the full reconstruction. In this multi-scale setting, LAPIS-SHRED deploys \emph{separate temporal models} on the LF and HF latent trajectories, as described in Section~\ref{sec:temporal_model}. All weights for the spatial module---including LF-SHRED, GAN, and HF-SHRED components---are held frozen during subsequent LAPIS-SHRED stages.

\begin{remark}[Spatial module Interchangeability]
\label{rem:backbone_modularity}
The choice of spatial reconstruction module is determined by the application domain and the nature of the available prior. Frame-by-frame SHRED is preferred for chaotic systems where localized temporal windows capture the attractor geometry, while Seq2Seq SHRED is preferred for dissipative systems with global temporal coherence. Multi-scale architectures such as Cheap2Rich and SENDAI are appropriate when a significant domain shift exists and spectral decomposition is beneficial. 
\end{remark}



\section{LAPIS-SHRED Architecture}
\label{sec:method}

We introduce LAPIS-SHRED (\textbf{LA}tent \textbf{P}hase \textbf{I}nference from \textbf{S}hort time sequence), an architecture for reconstructing or forecasting complete spatiotemporal dynamics from sparse sensor measurements confined to a short temporal window. The framework operates entirely in a learned latent space, decoupling the spatial reconstruction module from the temporal inference model, and is designed to interface modularly with any pre-trained SHRED that maps sparse sensor histories to full spatiotemporal states.

The critical challenge addressed by LAPIS-SHRED is the \emph{short-window observation constraint}: in many systems governed by coupled, multiscale physical processes, sensor measurements are available only over a brief temporal segment---either at the terminal end or the initial phase---while the complete spatiotemporal trajectory must be recovered. When only a short terminal window is observed, we seek to reconstruct the antecedent dynamics (\emph{backward inference}). When only a short initial window is observed, we seek to forecast the future evolution (\emph{forward inference}). Both settings preclude direct application of standard time-series reconstruction methods that require dense temporal observations. This constraint arises across diverse domains---from geophysical processes such as landslides where only the deposited material is observed~\cite{iverson1997physics, hungr2014varnes}, to model predictive control (MPC) problems where system trajectories must be inferred from terminal state specifications~\cite{mayne2000constrained, rawlings2008model}, to forecasting of complex multiscale systems such as rotating detonation engines (Section~\ref{sec:rde_results} and ~\ref{sec:1drde_results}) where real-time sensor data is available only over a short observation window and future evolution must be predicted~\cite{raman2023nonidealities, sato2021mixing}, to ecological monitoring (Section~\ref{sec:ndsi_results}) when only a brief, quality-controlled seasonal window of remote sensing observations is available, from which earlier or later phenological stages must be inferred~\cite{weiss2020remote}.


\subsection{Problem Formulation}
\label{sec:problem}

Let $\mathbf{Y}(t) \in \mathbb{R}^{n}$ denote the full spatial state at time $t$, where $n$ denotes the spatial dimensionality of the discretized field. A physics-based simulator $\mathcal{M}$ parameterized by $\boldsymbol{\theta}$ generates the temporal evolution:
\begin{equation}
\{\mathbf{Y}_0, \mathbf{Y}_1, \ldots, \mathbf{Y}_T\} = \mathcal{M}(\boldsymbol{\theta}).
\label{eq:simulation}
\end{equation}
Sparse sensor observations are obtained via a sampling operator $\mathbf{M} \in \mathbb{R}^{p \times n}$ with $p \ll n$:
\begin{equation}
\mathbf{s}_t = \mathbf{M} \, \mathbf{Y}_t \in \mathbb{R}^p.
\label{eq:sensor_sampling}
\end{equation}
Depending on the application, the observation constraint takes one of two complementary forms:

\textbf{Terminal-window observation (backward inference).} A short window of sensor observations $\{\mathbf{s}_{T-W}^{\text{obs}}, \ldots, \mathbf{s}_{T}^{\text{obs}}\}$ is available at the terminal end of the trajectory, where $W \ll T$. In the extreme case, only a single terminal frame $\mathbf{s}_T^{\text{obs}}$ is observed. The reconstruction task seeks a mapping $\mathcal{F}_{\text{bwd}}: \{\mathbf{s}_{T-W}^{\text{obs}}, \ldots, \mathbf{s}_T^{\text{obs}}\} \mapsto \{\hat{\mathbf{Y}}_0, \ldots, \hat{\mathbf{Y}}_T\}$ that recovers the complete antecedent temporal evolution from this short terminal window.

\textbf{Initial-window observation (forward inference).} A short window of observations $\{\mathbf{s}_{t_0}^{\text{obs}}, \ldots, \mathbf{s}_{t_0+W}^{\text{obs}}\}$ is available at the beginning of the trajectory, where $W \ll T$. The forecasting task seeks a mapping $\mathcal{F}_{\text{fwd}}: \{\mathbf{s}_{t_0}^{\text{obs}}, \ldots, \mathbf{s}_{t_0+W}^{\text{obs}}\} \mapsto \{\hat{\mathbf{Y}}_{t_0+W+1}, \ldots, \hat{\mathbf{Y}}_{t_0+W+T_{\text{fwd}}}\}$ that predicts future states beyond the observation window.

\begin{remark}[Extreme Data Constraint]
\label{rem:data_efficiency}
In both settings during inference, \textbf{only the sparse sensor measurements} $\mathbf{s}_t$ from the short observed window are provided to the model---not the full spatial fields at the observed frames, and not any intermediate or future states. These unobserved frames are used exclusively for post-hoc evaluation of reconstruction or inference quality. This represents a severely data-constrained regime: inferring $T$ full spatiotemporal frames from $W+1$ sparse sensor observations with $p \ll n$.
\end{remark}

\subsection{Encoding from the Observed Window}
\label{sec:encoding}

Given sparse sensor observations over a short temporal window, the LAPIS-SHRED encoding strategy depends on the nature and length of the observed sequence.

\textbf{Multi-Frame Window Encoding.} When a short window of $W+1$ sensor frames $\{\mathbf{s}_{t_0}, \ldots, \mathbf{s}_{t_0+W}\}$ is available (with $W > 1$), the frozen temporal unit processes these frames via its standard mechanism---either a sliding-window LSTM (in frame-by-frame mode) or a short-sequence input (in Seq2Seq mode)---to produce latent encodings $\{\mathbf{z}_{t_0}, \ldots, \mathbf{z}_{t_0+W}\}$. No special encoding strategy is required; the pre-trained model simply operates on the available sensor observations.

\textbf{Static Padding for Single-Frame Observation.} In the extreme case where only a single terminal frame $\mathbf{s}_T$ is observed---as may arise in strongly dissipative systems where the asymptotic state encodes sufficient information about the preceding dynamics---the Seq2Seq temporal unit still requires a temporal sequence as input. We resolve this through a \emph{static terminal padding} strategy that exploits the physical stationarity of the terminal state.

Upon cessation of dynamics, the system reaches a stable configuration: $\mathbf{Y}_{T+\delta} \approx \mathbf{Y}_T$ for $\delta > 0$. We construct a pseudo-sequence by replicating the terminal sensor vector:
\begin{equation}
\tilde{\mathbf{S}}_{\text{pad}} = [\underbrace{\mathbf{s}_T^{\text{obs}}, \mathbf{s}_T^{\text{obs}}, \ldots, \mathbf{s}_T^{\text{obs}}}_{L \text{ copies}}]^\top \in \mathbb{R}^{L \times p}.
\label{eq:static_padding}
\end{equation}
This padded sequence is processed by the frozen temporal unit to obtain the terminal latent representation:
\begin{equation}
\mathbf{z}_T^{\text{obs}} = \mathcal{E}(\tilde{\mathbf{S}}_{\text{pad}};\, \phi_{\text{temp}})_{[-1]},
\label{eq:terminal_encoding}
\end{equation}
where $[\cdot]_{[-1]}$ denotes extraction of the final temporal element.

To ensure the model learns to interpret static sequences appropriately, we augment training data with static padding: for each simulation trajectory $\{\mathbf{s}_0, \ldots, \mathbf{s}_T\}$, we append $L$ copies of $\mathbf{s}_T$:
\begin{equation}
\{\mathbf{s}_0, \ldots, \mathbf{s}_T, \underbrace{\mathbf{s}_T, \ldots, \mathbf{s}_T}_{L \text{ copies}}\}.
\label{eq:training_padding}
\end{equation}
The inherent flexibility of the bidirectional LSTM architecture accommodates variable-length inputs, so that the model can process the short $L$-frame padded sequence at inference despite having been trained on full-length sequences of length $T+L+1$. The critical insight is that static padding conditions the temporal unit to recognize constant signals as indicative of stationarity: the terminal latent representation produced from $L$ identical frames approximates the latent state the temporal unit would yield at position $T$ had it observed the complete antecedent trajectory. We employ this strategy in the NDSI backward inference experiment (Section~\ref{sec:ndsi_results}), where a single end-of-season snow-cover observation seeds the reconstruction of the entire preceding snow season.

\subsection{Temporal Dynamics Model}
\label{sec:temporal_model}

With the terminal latent states encoded from the observed short window, we require a mechanism to propagate the dynamics forward or backward in time within the latent space. LAPIS-SHRED supports two complementary temporal model architectures, each suited to different physical regimes and observation constraints.

\subsubsection{Seq2Seq Model (for Fixed-Length Inference)}
\label{sec:backward}
For inference where the output length is known at training time---such as backward reconstruction from a terminal window or forward prediction from an initial window---we train a \emph{sequence-to-sequence dynamics model} that learns the temporal mapping in latent space, producing the complete unobserved latent trajectory in a single forward pass.

\textbf{Architecture.} The Seq2Seq temporal model $\mathcal{B}$ takes the observed latent window and outputs the unobserved latent trajectory. In the backward (reconstruction) setting:
\begin{equation}
\{\hat{\mathbf{z}}_0, \hat{\mathbf{z}}_1, \ldots, \hat{\mathbf{z}}_{T-W-1}\} = \mathcal{B}(\{\mathbf{z}_{T-W}, \ldots, \mathbf{z}_T\}, T;\, \psi),
\label{eq:backward_model}
\end{equation}
where $T$ specifies the output sequence length and the observed latents are appended to form the full trajectory $\hat{\mathbf{Z}} = [\hat{\mathbf{z}}_0, \ldots, \hat{\mathbf{z}}_{T-W-1}, \mathbf{z}_{T-W}, \ldots, \mathbf{z}_T]$. In the forward (prediction) setting, the roles are reversed: the model observes the initial window $\{\mathbf{z}_0, \ldots, \mathbf{z}_{W-1}\}$ and predicts $\{\hat{\mathbf{z}}_W, \ldots, \hat{\mathbf{z}}_T\}$, with the full trajectory assembled as $\hat{\mathbf{Z}} = [\mathbf{z}_0, \ldots, \mathbf{z}_{W-1}, \hat{\mathbf{z}}_W, \ldots, \hat{\mathbf{z}}_T]$. In both cases, the observed latents are encoded directly from sensor data through the frozen SHRED temporal unit and are not re-predicted by $\mathcal{B}$.

Let $T_{\text{out}}$ denote the number of unobserved frames to predict ($T_{\text{out}} = T - W$ in the backward case). The model proceeds in three stages:

\textit{(i) Window compression and input preparation.}
A bidirectional LSTM compresses the observed $W$-frame latent window into a single compression $\tilde{\mathbf{z}}_T$. We employ a bidirectional architecture so it captures context from the entire observation window with equal fidelity, rather than privileging the final frame as a unidirectional architecture would. It is designed to promote a fixed-length compression of the observation window that is independent of the observable window length $W$, capturing salient dynamical information (e.g., phase, frequency, and amplitude).

The summary $\tilde{\mathbf{z}}_T$ is then replicated $T_{\text{out}}$ times and augmented with normalized positional encoding:
\begin{equation}
\tilde{\mathbf{z}}_t^{\text{in}} = \text{Proj}\!\left([\tilde{\mathbf{z}}_T;\, t/(T_{\text{out}}-1)]\right), \quad t = 0, \ldots, T_{\text{out}}-1,
\label{eq:backward_input}
\end{equation}
where $[\cdot\,;\,\cdot]$ denotes concatenation and $\text{Proj}$ is a learned linear projection mapping back to $\mathbb{R}^{d_z}$.

\textit{(ii) Sequence generation.} A second bidirectional LSTM with hidden dimension $d_h^{\mathcal{B}}$ processes the positionally-augmented sequence, propagating information across all timesteps to resolve the full temporal structure of the latent trajectory from the compressed dynamical state. A unidirectional LSTM then scans the resulting sequence in the forward temporal direction, progressively accumulating context to produce hidden states $\{\mathbf{h}_0, \ldots, \mathbf{h}_{T_{\text{out}}-1}\}$ that encode the predicted dynamics at each timestep.

\textit{(iii) Latent projection.} Each hidden state $\mathbf{h}_t$ is mapped back to the latent space through a 2-layer MLP with layer normalization and GELU activation:
\begin{equation}
\hat{\mathbf{z}}_t = \text{MLP}_{\text{out}}\!\left(\mathbf{h}_t\right) \in \mathbb{R}^{d_z}, \quad t = 0, \ldots, T_{\text{out}}-1.
\label{eq:backward_output}
\end{equation}

\textbf{Full Sequence Output.} Unlike autoregressive rollout which accumulates errors over $T$ steps, the model produces the \emph{entire sequence simultaneously}, enabling supervision at every timestep and avoiding accumulative prediction errors.

\textbf{Training Objective.} The training objective is a composite loss:
\begin{equation}
\mathcal{L}_{\text{backward}} = \lambda_r \mathcal{L}_{\text{recon}} + \lambda_s \mathcal{L}_{\text{shape}},
\label{eq:backward_loss}
\end{equation}

\textit{Reconstruction loss} ensures trajectory fidelity:
\begin{equation}
\mathcal{L}_{\text{recon}} = \frac{1}{T} \sum_{t=0}^{T-1} \|\hat{\mathbf{z}}_t - \mathbf{z}_t\|_2^2.
\label{eq:recon_loss}
\end{equation}

\textit{Shape loss} enforces structural similarity between predicted and target latent trajectories:
\begin{equation}
\begin{aligned}
\mathcal{L}_{\text{shape}} = &\frac{1}{T-1} \sum_{t=0}^{T-2} \|(\hat{\mathbf{z}}_{t+1} - \hat{\mathbf{z}}_t) - (\mathbf{z}_{t+1} - \mathbf{z}_t)\|_2^2 \\
&+ \frac{1}{2}\|\text{Var}(\hat{\mathbf{Z}}) - \text{Var}(\mathbf{Z})\|_2^2.
\end{aligned}
\label{eq:shape_loss}
\end{equation}
The shape loss comprises two components: (i) a \emph{temporal difference matching} term that aligns the predicted rate of change at each timestep with the ground truth rate of change, and (ii) a \emph{variance matching} term that ensures the overall dynamic range of the predicted trajectory matches that of the target.

\subsubsection{Autoregressive Model (for Open-Ended Inference)}
\label{sec:forward_ar}

For inference where the prediction horizon may be open-ended or unknown at training time---typically forward prediction---we train an \emph{autoregressive} (AR) latent dynamics model that predicts one step ahead given a lookback window of latent states.

\textbf{Architecture.} The AR model $\mathcal{F}_{\text{AR}}$ maps a lookback window of $W$ consecutive latent states to the next latent state:
\begin{equation}
\hat{\mathbf{z}}_{t+1} = \mathcal{F}_{\text{AR}}\left(\mathbf{z}_{t-W+1:t};\, \xi\right) \in \mathbb{R}^{d_z},
\label{eq:ar_forward}
\end{equation}
where $\mathbf{z}_{t-W+1:t} = [\mathbf{z}_{t-W+1}, \ldots, \mathbf{z}_t] \in \mathbb{R}^{W \times d_z}$. The architecture consists of a bidirectional LSTM that processes the lookback window, followed by an MLP that maps the final hidden state to the predicted latent:
\begin{equation}
\begin{aligned}
\mathbf{h}_W &= \text{BiLSTM}\left(\mathbf{z}_{t-W+1:t};\, \xi_{\text{lstm}}\right)_{[-1]} \in \mathbb{R}^{2d_h}, \\
\hat{\mathbf{z}}_{t+1} &= \text{MLP}\left(\mathbf{h}_W;\, \xi_{\text{mlp}}\right) \in \mathbb{R}^{d_z}.
\end{aligned}
\label{eq:ar_bilstm_mlp}
\end{equation}

\textbf{Coupled LF--HF Forward Models.} When the spatial module decomposes the latent space into LF and HF components (as in the multi-scale setting), we train \emph{separate but coupled} forward models. The LF forward model $\mathcal{F}_{\text{AR}}^{\text{LF}}$ predicts autoregressively without conditioning, while the HF forward model $\mathcal{F}_{\text{AR}}^{\text{HF}}$ is \emph{conditioned} on the predicted LF trajectory:
\begin{align}
\hat{\mathbf{z}}_{t+1}^{\text{LF}} &= \mathcal{F}_{\text{AR}}^{\text{LF}}\left(\mathbf{z}_{t-W+1:t}^{\text{LF}};\, \xi^{\text{LF}}\right), \label{eq:ar_lf} \\
\hat{\mathbf{z}}_{t+1}^{\text{HF}} &= \mathcal{F}_{\text{AR}}^{\text{HF}}\left(\mathbf{z}_{t-W+1:t}^{\text{HF}},\, \hat{\mathbf{z}}_{t+1}^{\text{LF}};\, \xi^{\text{HF}}\right). \label{eq:ar_hf}
\end{align}
This conditioning mirrors the physical coupling in multi-scale architectures where HF residuals depend on the current LF reconstruction.

\textbf{Autoregressive Rollout.} At inference, the model generates predictions of arbitrary length by feeding each predicted state back as input:
\begin{equation}
\hat{\mathbf{z}}_{t+1} = \mathcal{F}_{\text{AR}}\left([\hat{\mathbf{z}}_{t-W+1}, \ldots, \hat{\mathbf{z}}_t]\right), \quad t = W, W+1, \ldots
\label{eq:ar_rollout}
\end{equation}
This sliding-window feedback loop enables generation of latent trajectories of arbitrary length, at the cost of potential error accumulation over longer horizons.

\textbf{Training via Sliding Windows.} Training data consists of (input window, target) pairs extracted at every valid temporal position from each latent trajectory:
\begin{equation}
\mathcal{D}_{\text{AR}} = \bigcup_{k=1}^{K} \left\{ \left(\mathbf{z}^{(k)}_{j:j+W},\, \mathbf{z}^{(k)}_{j+W}\right) : j = 0, \ldots, T_k - W - 1 \right\},
\label{eq:ar_training_data}
\end{equation}
trained with the standard one-step MSE loss. Latent space normalization is applied before training to ensure equitable contribution across latent dimensions.

\begin{remark}[Seq2Seq vs.\ Autoregressive Temporal Models]
\label{rem:seq2seq_vs_ar}
The choice between Seq2Seq and autoregressive temporal models reflects a trade-off between output quality and flexibility. The Seq2Seq model produces all output timesteps simultaneously, avoiding error accumulation but requiring a fixed output length at training time. The autoregressive model generates one step at a time with feedback, enabling open-ended prediction horizons but incurring accumulative errors over long rollouts. The pairings employed in our experiments---Seq2Seq for backward, AR for forward---reflect the natural constraints of each setting, though in principle either temporal model can be applied in either direction.
\end{remark}

\subsection{Frozen Decoder and Spatial Reconstruction}
\label{sec:frozen_decoder}

Both temporal models operate \emph{entirely in latent space}---they predict latent trajectories $\{\hat{\mathbf{z}}_t\}$, not spatial states $\{\hat{\mathbf{Y}}_t\}$. The final mapping from latent to spatial domain is performed by the \emph{frozen decoder} $\mathcal{D}$ with its weights held fixed:
\begin{equation}
\hat{\mathbf{Y}}_t = \mathcal{D}(\hat{\mathbf{z}}_t;\, \phi^*_{\text{dec}}), \quad t = 0, \ldots, T.
\label{eq:frozen_decode}
\end{equation}
This design provides \emph{regularization} (the temporal model is incentivized to produce latents within the decoder's representational space) and \emph{modular training} (the temporal model can be replaced without retraining the spatial module). When the architecture decomposes into LF and HF components, the frozen decoders are applied separately:
\begin{equation}
\hat{\mathbf{Y}}_t = \mathcal{D}_{\text{LF}}(\hat{\mathbf{z}}_t^{\text{LF}}) + \mathcal{D}_{\text{HF}}(\hat{\mathbf{z}}_t^{\text{HF}}).
\label{eq:frozen_decode_dual}
\end{equation}

\subsection{Ensemble Training for Parameter Generalization}
\label{sec:ensemble}

A model trained on a single parameter configuration $\boldsymbol{\theta}^*$ may fail to generalize when deployed on systems with different physical parameters. We address this through ensemble training across multiple parameter regimes.

\textbf{Training Ensemble.} Let $\Theta_{\text{train}} = \{\boldsymbol{\theta}^{(1)}, \ldots, \boldsymbol{\theta}^{(K)}\}$ denote a set of $K$ training parameter configurations spanning the plausible parameter space. For each $\boldsymbol{\theta}^{(k)}$, we generate a simulation trajectory via Eq.~\eqref{eq:simulation}, yielding the training ensemble:
\begin{equation}
\mathcal{D}_{\text{train}} = \bigcup_{k=1}^{K} \left\{ \mathbf{Y}^{(k)}_0, \ldots, \mathbf{Y}^{(k)}_{T_k} \right\}.
\label{eq:ensemble_data}
\end{equation}

\textbf{Joint Training.} Both the SHRED model and the temporal dynamics model are trained on the K simulation trajectories; no ground truth observations are used during training:

\begin{equation}
\begin{aligned}
\phi^* &= \argmin_{\phi} \sum_{k=1}^{K} \mathcal{L}_{\text{SHRED}}^{(k)}(\phi), \\
\psi^* &= \argmin_{\psi} \sum_{k=1}^{K} \mathcal{L}_{\text{temporal}}^{(k)}(\psi).
\end{aligned}
\label{eq:joint_training}
\end{equation}
An alternative to parameter-spanning ensembles is \emph{temporal diversity}: extracting multiple sub-sequences from a single long trajectory with varying start and end points (Section~\ref{sec:rde_results}), which provides coverage of different dynamical regimes within a single parameter configuration.

\subsection{Inference Pipeline}
\label{sec:inference}

\textbf{Backward inference (terminal window).} Given the terminal sensor observations $\{\mathbf{s}_{T-W}^{\text{obs}}, \ldots, \mathbf{s}_T^{\text{obs}}\}$, reconstruction proceeds as: (1)~encode the observed window to obtain terminal latents $\{\mathbf{z}_{T-W}, \ldots, \mathbf{z}_T\}$ (using padding if $W = 0$); (2)~generate $\{\hat{\mathbf{z}}_0, \ldots, \hat{\mathbf{z}}_{T-W-1}\} = \mathcal{B}(\{\mathbf{z}_{T-W}, \ldots, \mathbf{z}_T\}, T;\, \psi^*)$; (3)~form the complete latent trajectory by concatenation; (4)~decode via the frozen decoder.

\textbf{Forward inference (initial window).} Given a short observed window $\{\mathbf{s}_{t_0}^{\mathrm{obs}},\ldots,\mathbf{s}_{t_0+W}^{\mathrm{obs}}\}$, prediction proceeds as: (1)~encode observed frames to obtain $\{\mathbf{z}_{t_0}, \ldots, \mathbf{z}_{t_0+W}\}$; (2)~normalize latent codes; (3)~rollout the forward model for $T_{\text{fwd}}$ steps via Eq.~\eqref{eq:ar_rollout}; (4)~denormalize predicted latents and decode via the frozen decoder. For multi-scale architectures, LF rollout is performed first, followed by LF-conditioned HF rollout, and the decoded spatial fields are summed per Eq.~\eqref{eq:frozen_decode_dual}.

At inference, the pipeline receives its sole input from the true system: a short observation window of hyper-sparse sensor measurements. The architecture operates in a purely feedforward manner to produce the complete temporal reconstruction or forecast, requiring no retraining, fine-tuning, or access to ground truth beyond the observed short window.

\begin{table*}[t]
\centering
\caption{Summary of LAPIS-SHRED experiments and quantitative results. ``Temporal'' indicates the latent dynamics model; ``Dir.'' the inference direction; ``Mode'' the SHRED operational mode. RMSE and NRMSE are reported for the LAPIS-SHRED reconstruction over the full trajectory. SSIM is reported where computed (``--'' otherwise).}
\label{tab:summary_metrics}
\footnotesize
\setlength{\tabcolsep}{4pt}
\renewcommand{\arraystretch}{1.15}
\begin{tabular}{lcccccccc}
\toprule
\textbf{Experiment} & \textbf{Mode} & \textbf{Temporal} & \textbf{Dir.} & \textbf{RMSE} & \textbf{SSIM} & $\boldsymbol{\Delta}$ & \textbf{NRMSE} \\
\midrule
2D KS & Frame & Seq2Seq & Bwd & 0.727 & 0.524 & 15.94 & 0.046 \\
\multirow{2}{*}{2D KF} & \multirow{2}{*}{Frame} & \multirow{2}{*}{Seq2Seq} & \multirow{2}{*}{Bwd} & $|\mathbf{u}|$: 0.179 & $|\mathbf{u}|$: 0.887 & $|\mathbf{u}|$: 4.10 & $|\mathbf{u}|$: 0.044 \\
 & & & & $\omega$: 5.616 & $\omega$: 0.786 & $\omega$: 155.62 & $\omega$: 0.036 \\
\multirow{2}{*}{2D KVS} & \multirow{2}{*}{Seq2Seq} & \multirow{2}{*}{Seq2Seq} & Fwd & 9.7e-4 & 0.564 & \multirow{2}{*}{0.03} & 0.037 \\
 & & & Bwd & 8.7e-4 & 0.638 & & 0.033 \\
HF-RDE & Frame & AR & Fwd & 0.108 & -- & 0.943 & 0.114 \\
\multirow{2}{*}{1D RDE} & \multirow{2}{*}{Seq2Seq} & \multirow{2}{*}{Seq2Seq} & \multirow{2}{*}{Bwd} & $P$: 0.178 & \multirow{2}{*}{--} & $P$: 7.05 & $P$: 0.025 \\
 & & & & $T$: 0.266 & & $T$: 8.40 & $T$: 0.032 \\
\multirow{2}{*}{NDSI} & \multirow{2}{*}{Seq2Seq} & \multirow{2}{*}{Seq2Seq} & Fwd & 0.266 & 0.405 & \multirow{2}{*}{1.59} & 0.167 \\
 & & & Bwd & 0.207 & 0.495 & & 0.130 \\
\bottomrule
\end{tabular}
\end{table*}

\section{Results}
\label{sec:results}

We evaluate the LAPIS-SHRED framework across six experiments spanning chaotic PDEs, vortex-dominated incompressible flows, multiscale propulsion physics, volatile combustion dynamics, and satellite-derived environmental fields. Each experiment exercises a distinct combination of LAPIS-SHRED components---SHRED architecture, temporal model, and inference direction---demonstrating the framework's versatility as a broadly applicable latent-space inference architecture. Table~\ref{tab:summary_metrics} summarizes the experimental configurations and quantitative results.

We report the root mean squared error (RMSE), the Structural Similarity Index Measure (SSIM) where computed, the data range $\Delta = \max - \min$ of the ground truth field, and the normalized RMSE (NRMSE $=$ RMSE$/\Delta$) as the primary metrics throughout.

\subsection{2D Kuramoto--Sivashinsky (Backward Inference)}
\label{sec:2dks_results}

\textbf{Dynamics.} The two-dimensional Kuramoto--Sivashinsky (KS) equation is a canonical model for spatiotemporal chaos, originally derived to describe diffusive--thermal instabilities of laminar flame fronts~\cite{kuramoto1978diffusion,michelson1977nonlinear}. The equation has since found application in modeling thin liquid film instabilities on inclined planes~\cite{nepomnyashchii1974stability} and interfacial dynamics in two-phase flows~\cite{kalogirou2015depth}. In its undamped form on a doubly-periodic domain $[0, L_x) \times [0, L_y)$:
\begin{equation}
u_t + \tfrac{1}{2}(u_x^2 + u_y^2) + \nabla^2 u + \nabla^4 u = 0,
\label{eq:2dks}
\end{equation}
the system exhibits rich spatiotemporal chaos with positive Lyapunov exponents, making it a challenging benchmark for temporal inference from short observation windows.

\textbf{LAPIS-SHRED Configuration.} We employ the \emph{frame-by-frame} mode on $K = 8$ simulation trajectories generated from perturbed initial conditions on a $64 \times 64$ spatial grid with $T = 101$ temporal snapshots. The backward Seq2Seq temporal model infers the preceding $91$ latent frames from the last $10$ frames of sensor observations (approximately 10\% of the trajectory), using $p = 3$ sensors.

\textbf{Results.} Figure~\ref{fig:2dks_results} presents the LAPIS-SHRED backward reconstruction. Over the full trajectory, LAPIS-SHRED achieves RMSE of $0.727$ and SSIM of $0.524$, compared to the SHRED baseline (with access to the full sensor time histories) which achieves RMSE of $0.641$ and SSIM of $0.558$. With a data range of $\Delta \approx 15.94$, the NRMSE of $0.046$ indicates that LAPIS-SHRED recovers the chaotic spatiotemporal evolution to within 5\% of the field's dynamic range from only 10\% of the temporal data and 3 spatial sensors.

\begin{figure}[t]
\centering
\includegraphics[width=\columnwidth]{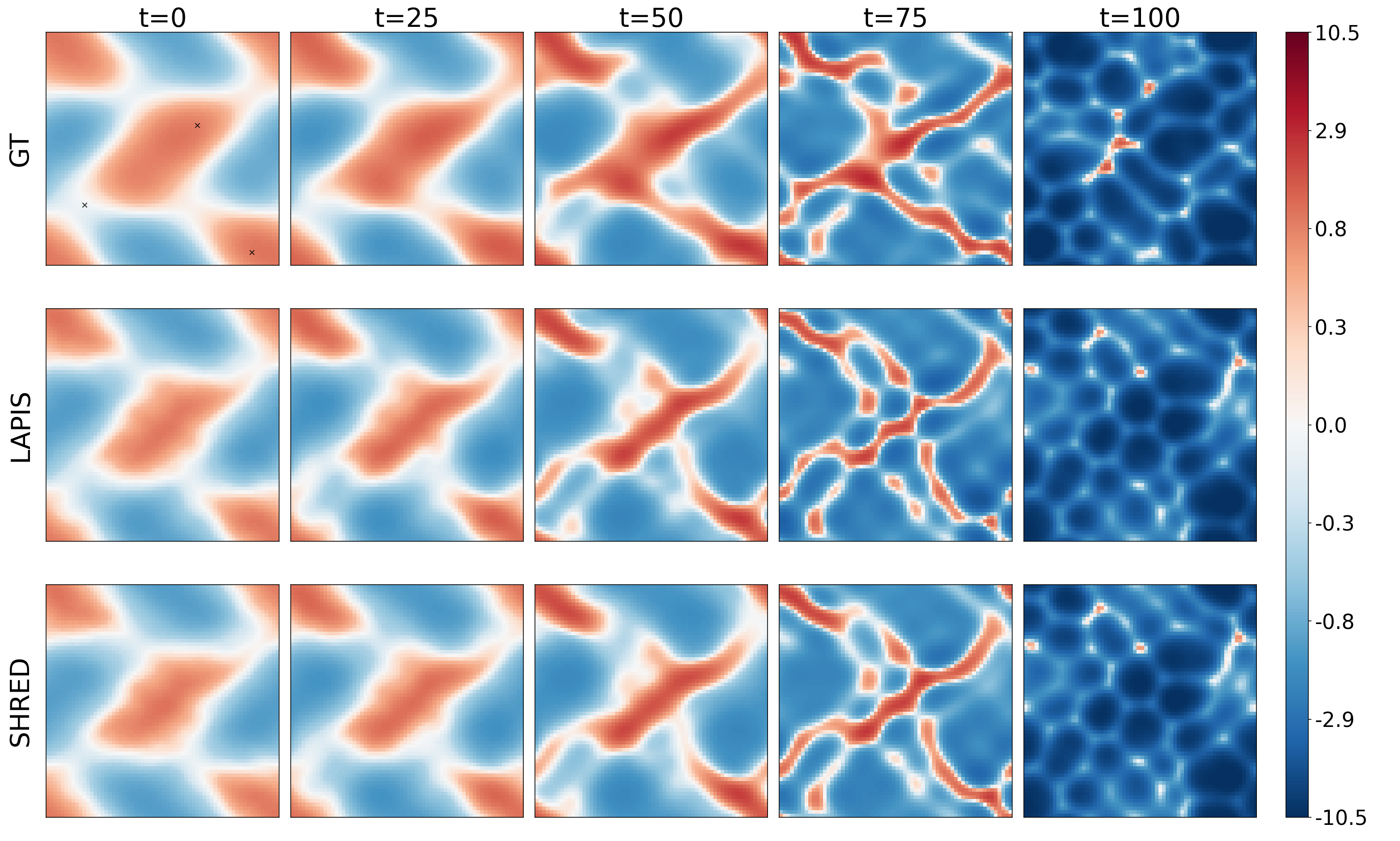}
\caption{LAPIS-SHRED backward reconstruction on the 2D Kuramoto--Sivashinsky system. Ground truth (top row), LAPIS-SHRED reconstruction from the terminal 10\% of the trajectory (second row), and SHRED baseline using the full sensor time-series (third row), shown at selected time steps.}
\label{fig:2dks_results}
\end{figure}

\subsection{2D Kolmogorov Flow (Backward Inference)}
\label{sec:2dkf_results}

\textbf{Dynamics.} The two-dimensional Kolmogorov flow is an incompressible Navier--Stokes system with sinusoidal forcing, a classical benchmark for computational fluid dynamics and data-driven turbulence modeling~\cite{kolmogorov1941dissipation,kochkov2021machine}:
\begin{equation}
\omega_t + \mathbf{u} \cdot \nabla \omega = \frac{1}{Re}\, \nabla^2 \omega + f_\omega, \qquad \mathbf{u} = \nabla^\perp \psi, \quad \nabla^2 \psi = -\omega,
\label{eq:2dkf}
\end{equation}
where $\omega$ is the vorticity, $\psi$ the streamfunction, and $f_\omega = k_0 \cos(k_0 y)$ with $k_0 = 4$ and $Re = 50$. As the Reynolds number increases, the flow undergoes a sequence of bifurcations from laminar to chaotic regimes, exhibiting complex vortex dynamics~\cite{platt1991investigation}

\textbf{LAPIS-SHRED Configuration.} We employ the \emph{frame-by-frame} mode on $K = 15$ simulation trajectories generated from perturbed initial conditions on a $64 \times 64$ spatial grid with $T = 101$ snapshots. The backward Seq2Seq temporal model infers the preceding $91$ frames from the last $10$ frames of sensor observations (approximately 10\% of the trajectory), using $p = 8$ sensors. Both velocity $|\mathbf{u}|$ and vorticity $\omega$ fields are reconstructed.

\textbf{Results.} Figures~\ref{fig:2dkf_velocity} and~\ref{fig:2dkf_vorticity} present the LAPIS-SHRED backward reconstruction for velocity and vorticity fields, respectively. On velocity field over the full trajectory, LAPIS-SHRED achieves RMSE of $0.179$ and SSIM of $0.887$, closely approaching the SHRED baseline RMSE of $0.152$ and SSIM of $0.862$. The consistently high SSIM values indicate that LAPIS-SHRED preserves the spatial structure of the vortical flow fields, capturing coherent vortex patterns even in the temporally unobserved regime.

\begin{figure}[t]
\centering
\includegraphics[width=\columnwidth]{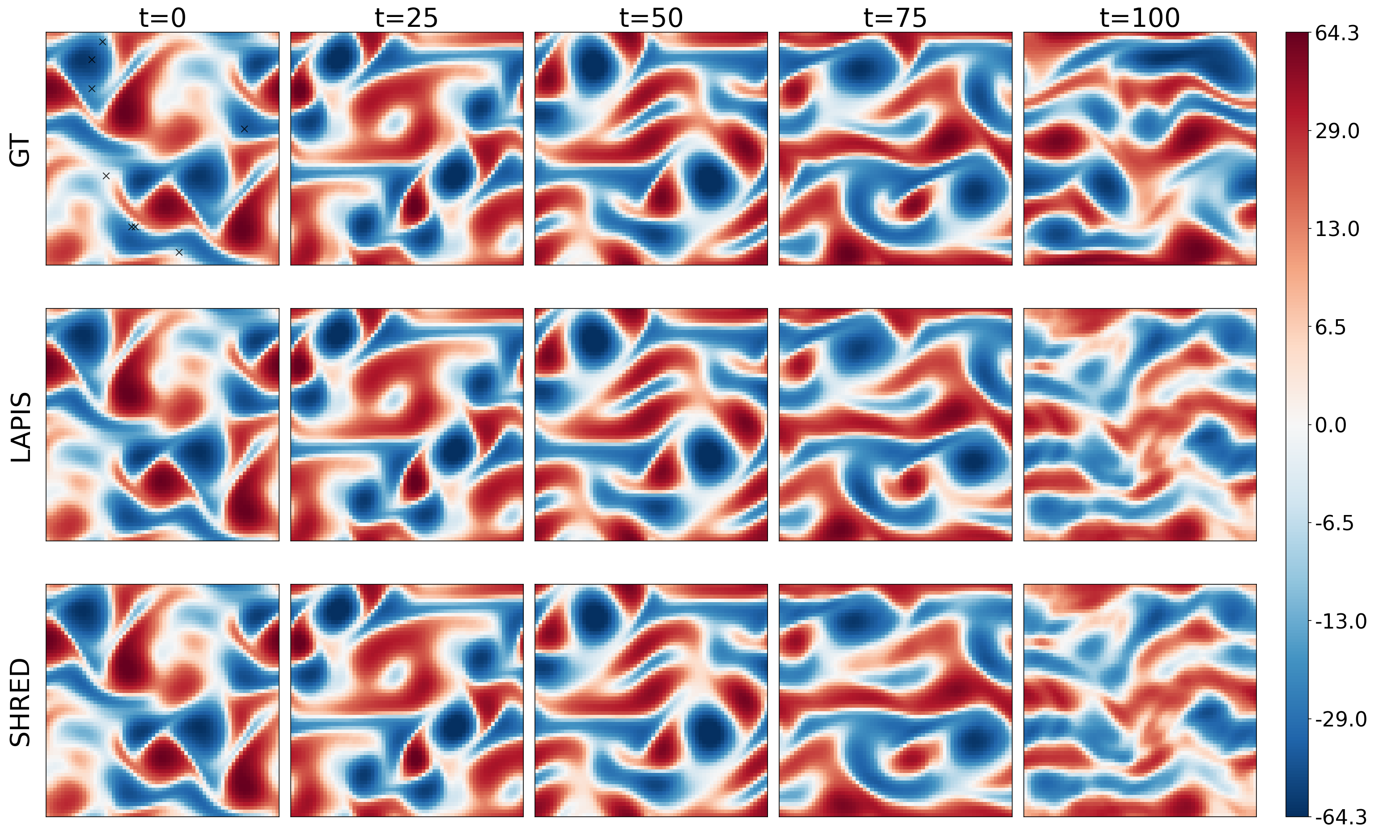}
\caption{LAPIS-SHRED backward reconstruction on 2D Kolmogorov flow: vorticity field. Ground truth (top row), LAPIS-SHRED reconstruction from the terminal 10\% of the trajectory (second row), and SHRED baseline using the full sensor time-series (third row), shown at selected time steps.}
\label{fig:2dkf_vorticity}
\end{figure}

\subsection{2D von Karman Vortex Street (Forward and Backward Inference)}
\label{sec:2dkvs_results}

\textbf{Dynamics.} The von Karman vortex street is a canonical pattern of alternating vortices shed from a bluff body immersed in a viscous flow~\cite{von1911mechanismus,chk1996vortex}. We consider two-dimensional incompressible flow past a circular cylinder, governed by the vorticity transport equation:
\begin{equation}
\omega_t + \mathbf{u} \cdot \nabla \omega = \nu\, \nabla^2 \omega, \qquad \nabla \cdot \mathbf{u} = 0,
\label{eq:2dkvs}
\end{equation}
where $\omega$ is the vorticity, $\mathbf{u}$ the velocity, and $\nu$ the kinematic viscosity. At Reynolds numbers in the range $\mathrm{Re} \approx 60$--$130$, the wake exhibits periodic vortex shedding with Strouhal number $\mathrm{St} \approx 0.165$, producing a low-dimensional limit-cycle attractor.

\textbf{LAPIS-SHRED Configuration.} We employ the \emph{Seq2Seq} mode on $K = 8$ simulation trajectories, with $p = 5$ sensors placed in the wake region. The cylinder interior is excluded from visualization and from all metric computations. We perform two experiments on this dataset, both using $10\%$ of the trajectory as the observation window. Figures~\ref{fig:2dkvs_backward} and~\ref{fig:2dkvs_forward} present the backward and forward reconstruction results, respectively.

\textbf{(i) Forward inference} from a short initial window. The first $14$ frames of the ground-truth sensor sequence are provided, and LAPIS-SHRED predicts the remaining $127$ frames of vortex-shedding evolution. Over the full trajectory, LAPIS-SHRED achieves RMSE of $9.7 \times 10^{-4}$ and SSIM of $0.564$, compared to the SHRED baseline (with access to the full sensor time histories) which achieves RMSE of $3.2 \times 10^{-4}$ and SSIM of $0.882$.

\textbf{(ii) Backward inference} from a terminal observation window. The last $14$ frames are provided, and LAPIS-SHRED reconstructs the preceding $127$ frames. Over the full trajectory, LAPIS-SHRED achieves RMSE of $8.7 \times 10^{-4}$ and SSIM of $0.638$. With a data range of $\Delta \approx 0.03$, the NRMSE values---$0.037$ (forward) and $0.033$ (backward)---confirm that LAPIS-SHRED recovers the vortex-shedding dynamics to within $2.5$--$3.3\%$ of the field's dynamic range from only $5$ sensors and $10\%$ of the temporal data.

\begin{figure}[t]
\centering
\includegraphics[width=\columnwidth]{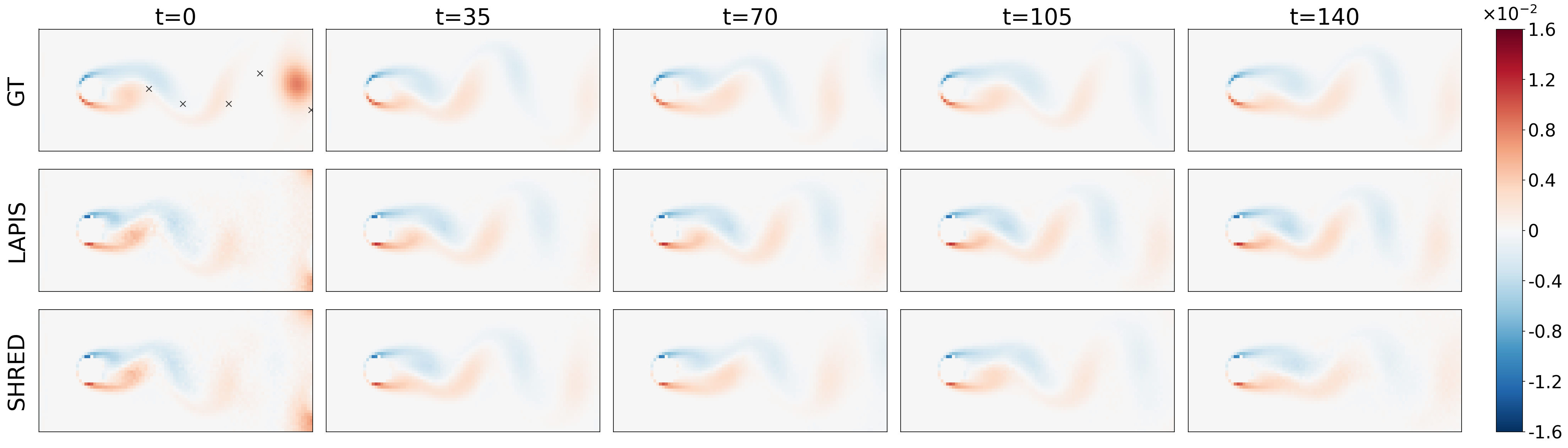}
\caption{LAPIS-SHRED forward prediction on the 2D von Karman vortex street. Ground truth (top row), LAPIS-SHRED prediction from the initial $10\%$ of the trajectory (second row), and SHRED baseline using the full sensor time-series (third row), shown at selected time steps.}
\label{fig:2dkvs_forward}
\end{figure}

\subsection{High-Fidelity RDE (Forward Prediction)}
\label{sec:rde_results}

Rotating detonation engines (RDEs) sustain continuous detonation waves within an annular combustor, offering a pathway toward pressure-gain propulsion with higher thermodynamic efficiency than conventional deflagration-based engines~\cite{raman2023nonidealities}. The underlying physics couples compressible flow, shock--detonation interactions, stiff finite-rate chemical kinetics, and injector-driven mass and energy exchange across disparate spatial and temporal scales. High-fidelity numerical simulation of these coupled processes requires solving the three-dimensional reacting compressible Navier--Stokes equations with detailed chemistry on adaptively refined meshes, at computational costs exceeding $2$~million CPU-hours for a single operating condition~\cite{sharma2024amrex}. This computational burden motivates reduced-order surrogates for design exploration, state estimation, and real-time monitoring.

Two distinct RDE datasets are employed in the LAPIS-SHRED experiments of Section~\ref{sec:rde_results} and ~\ref{sec:1drde_results}. The first is a high-fidelity dataset derived from a three-dimensional simulation of a methane--oxygen rotating detonation rocket engine exhibiting three co-rotating detonation waves, projected onto a one-dimensional azimuthal ring representation ($n = 100$ grid points, $m = 250$ temporal snapshots corresponding to one full wave rotation)~\cite{bao2026cheap2rich}.

\textbf{Dynamics.} We employ the high-fidelity RDE dataset: the preprocessed one-dimensional ring representation ($n = 100$ grid points, $m = 250$ temporal snapshots) released alongside the Cheap2Rich framework. This publicly available dataset was derived from a high-fidelity reacting compressible Navier--Stokes simulation.

\textbf{LAPIS-SHRED Configuration.} We employ the multi-scale architecture operating in frame-by-frame mode, which decomposes reconstruction into a LF pathway (trained on Koch's 1D RDE model~\cite{koch2020modeling}) and a HF pathway capturing fine-scale injector-driven corrections. Coupled autoregressive forward models are trained on the LF and HF latent spaces ($d_z = 32$ each), with the HF model conditioned on predicted LF states (Eqs.~\ref{eq:ar_lf}--\ref{eq:ar_hf}). The lookback window is $W = 25$ latent frames, and 12 overlapping sub-sequences are extracted from the single trajectory to provide temporal diversity. 

\textbf{Validation.} The observation window spans $t = 20.0$ to $t = 22.5$ (frames 200--225), and the forward model predicts 25 frames ahead to $t = 25.0$ against held-out ground truth. Within the observation window, LAPIS-SHRED achieves RMSE of $0.0933$; in the prediction window, RMSE is $0.1212$; the combined full-window RMSE is $0.1081$. This closely matches the Cheap2Rich baseline RMSE of $0.1013$, confirming that the autoregressive rollout maintains reconstruction quality over the short prediction horizon.

\textbf{Extrapolation.} A key capability is forecasting \emph{beyond the training data horizon}. We roll out 200 additional latent steps from $t = 25.0$ to $t = 45.0$---a temporal extent representing 80\% of the original dataset for which \emph{no ground truth exists}. Figure~\ref{fig:rde_extrapolation} presents the results. The LF component (three dominant detonation wavefronts) remains stable and coherent over the full 200-step horizon, while the HF component gradually attenuates---a signature of autoregressive error accumulation that is physically interpretable: the dominant wave dynamics persist while fine-scale stochastic fluctuations regress toward the mean.

The ability to generate 200 frames of physically plausible forward prediction from only 25 frames of observations is significant given that extending the high-fidelity simulation by an equivalent temporal duration would require weeks of supercomputer time and millions of additional CPU-hours. This demonstrates the practical value of LAPIS-SHRED forward inference for computationally expensive multiscale systems: once the model has been trained on available data, LAPIS-SHRED enables low-cost temporal extrapolation that maintains physically consistent dynamics across both dominant wave structures and fine-scale injector-driven features.

\begin{figure}[t]
\centering
\includegraphics[width=\columnwidth]{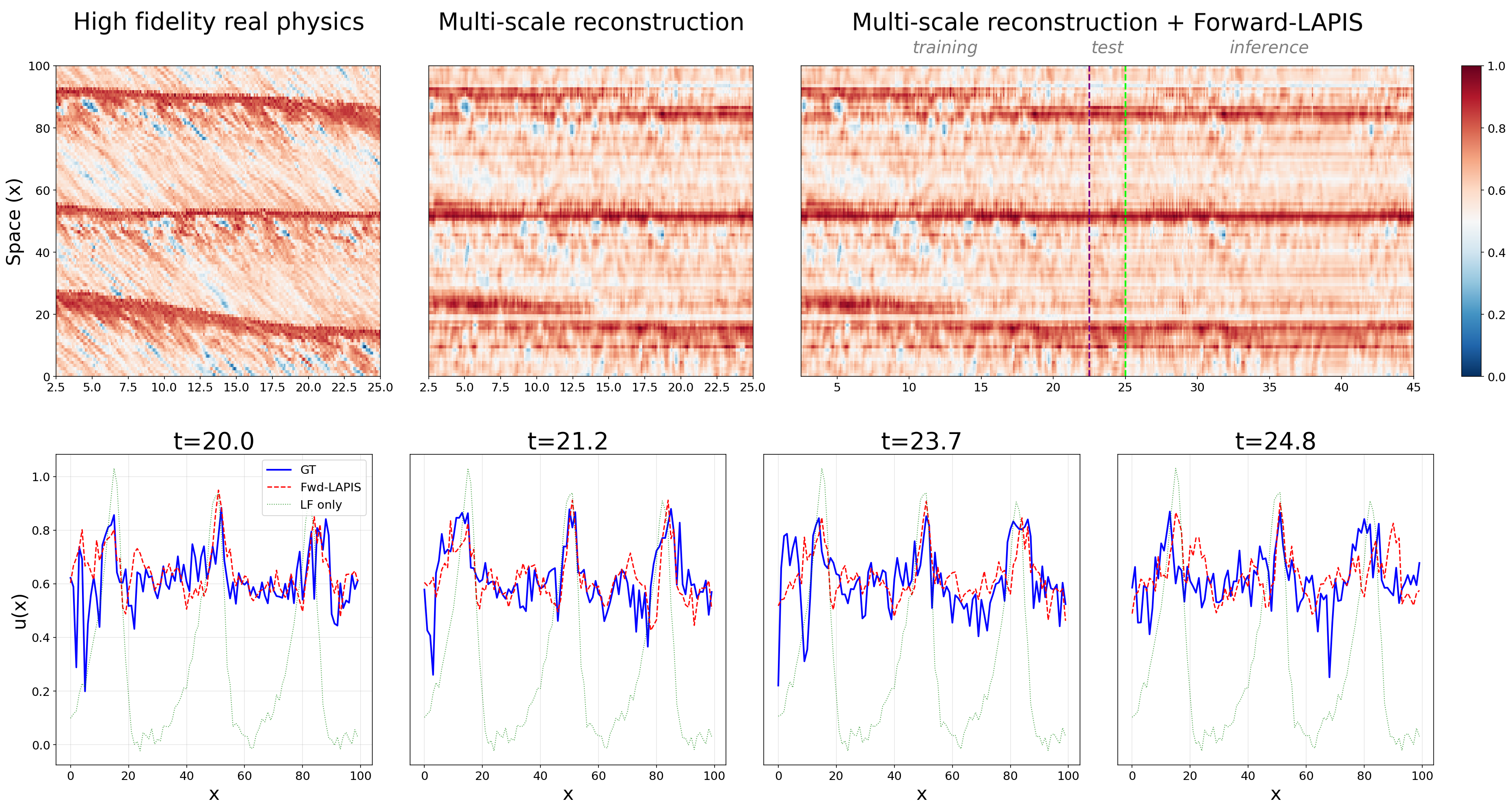}
\caption{Forward-LAPIS-SHRED inference with multi-scale architecture on the 1D rotating detonation engine (RDE) dataset. Top row: spatiotemporal field of the high-fidelity data (left), multi-scale reconstruction (center), and the Forward-LAPIS-SHRED inference (right), which concatenates the training window, a held-out test region, and a 200-step autoregressive forward rollout into the unobserved regime. The purple dashed line marks the training data boundary and the green dashed line marks the observation boundary. Bottom row: spatial snapshots at selected time steps comparing the ground truth, Forward-LAPIS-SHRED inference, and the low-fidelity-only baseline.}
\label{fig:rde_extrapolation}
\end{figure}

\subsection{1D RDE Ignition Stage (Backward Inference)}
\label{sec:1drde_results}

The second RDE dataset is generated by Koch's one-dimensional RDE model~\cite{koch2020modeling}, a reduced-order surrogate based on the reactive Euler equations with source terms representing injection, mixing, and chemical kinetics. Koch's model captures the dominant detonation-front propagation at dramatically reduced cost but omits injector-driven instabilities and fine-scale mixing processes, producing structured discrepancies relative to high-fidelity simulations.

\textbf{Dynamics.} We focus on the \emph{ignition stage}---the transient phase during which detonation waves are initiated and the system transitions from quiescent to active detonation. Unlike the high-fidelity RDE experiment above, which considers the established quasi-steady propagation regime, this regime exhibits volatile thermodynamic behavior including sharp pressure and temperature fronts. The reconstructed fields are the pressure $P(x,t)$ and temperature $T(x,t)$ on a spatial domain with $m_x = 4800$ grid points.

\textbf{LAPIS-SHRED Configuration.} We employ the \emph{Seq2Seq} mode on $K = 5$ simulation trajectories. The temporal domain is clipped to the ignition regime ($t \in [10.0, 20.0]$, yielding $T = 101$ frames) and the spatial domain is downsampled by a factor of 100 to $m_x^{\text{coarse}} = 48$ grid points (state dimension $n = 96$ for the two-field system). Pressure and temperature are treated as separate channels, and $p = 16$ sensors are placed across the spatial domain. The backward Seq2Seq temporal model receives the terminal 20\% of the latent trajectory and reconstructs the preceding 80\%.

\textbf{Results.} Figure~\ref{fig:1drde_results} presents the backward reconstruction. Over the full trajectory, LAPIS-SHRED achieves RMSE of $0.178$ for pressure and $0.266$ for temperature, compared to the SHRED baseline (full sensor time-series) RMSE of $0.162$ and $0.283$, respectively.

The NRMSE values---$0.025$ for pressure ($\Delta_P \approx 7.05$) and $0.032$ for temperature ($\Delta_T \approx 8.40$)---confirm that LAPIS-SHRED recovers the volatile ignition-stage dynamics to within 3\% of each field's dynamic range from only the terminal 20\% of sensor observations.

\begin{figure}[t]
\centering
\includegraphics[width=\columnwidth]{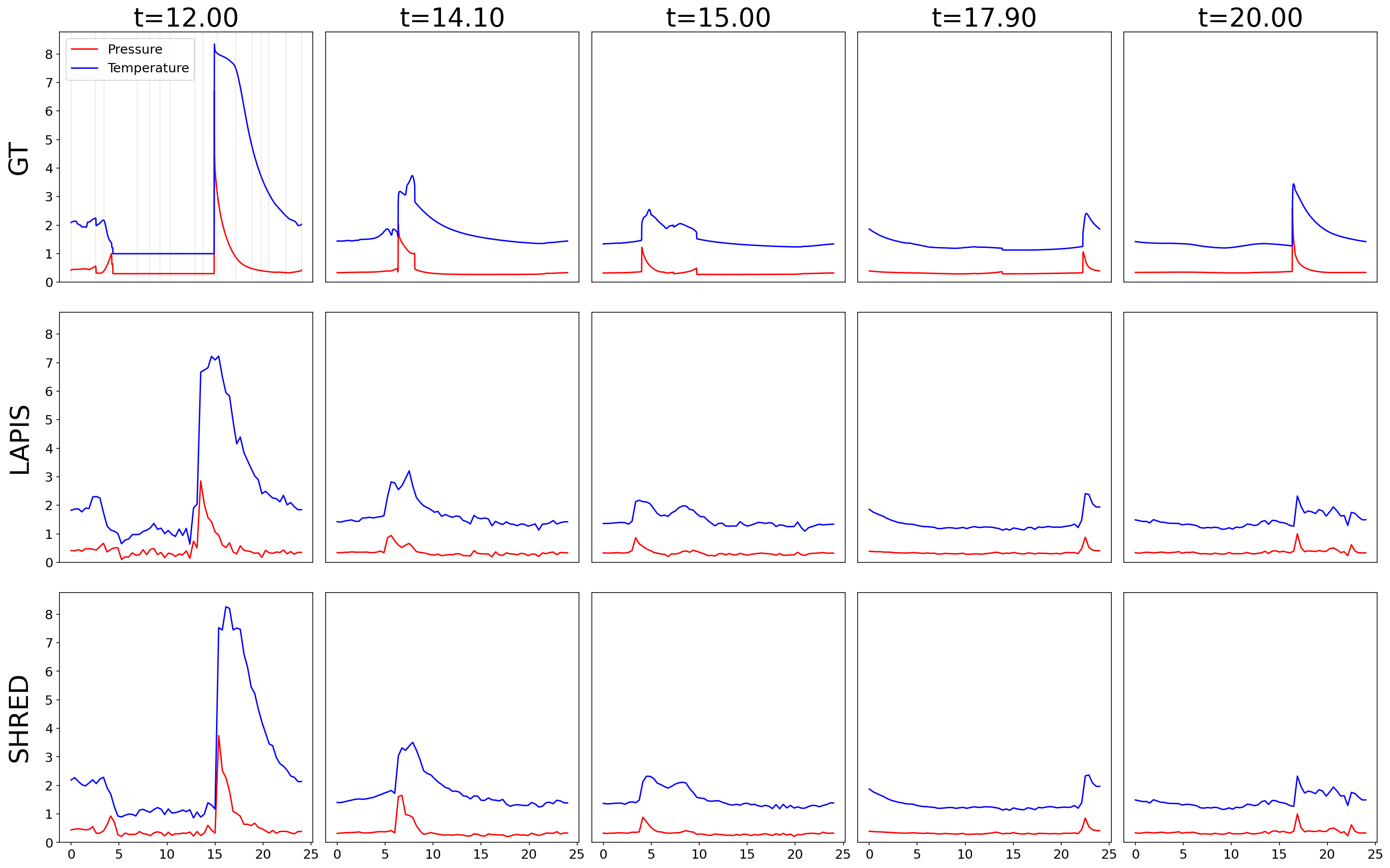}
\caption{LAPIS-SHRED backward reconstruction on the 1D RDE ignition stage. Ground truth (top row), LAPIS-SHRED reconstruction from the terminal 20\% of the trajectory (second row), and SHRED baseline using the full sensor history (third row), shown at selected time steps. Pressure field is shown in red; temperature in blue.}
\label{fig:1drde_results}
\end{figure}

\subsection{2D NDSI Snow Cover (Forward and Backward Inference)}
\label{sec:ndsi_results}

The Normalized Difference Snow Index (NDSI) is a satellite-derived spectral index that quantifies snow coverage by exploiting the contrasting reflectance of snow in visible and shortwave infrared bands~\cite{hall2002modis}. We employ the MODIS daily snow-cover products MOD10A1.061 (Terra) and MYD10A1.061 (Aqua) at 500~m spatial resolution, which provide global daily NDSI maps encoding snow-cover percentage at each pixel (scaled to $[0, 1]$). Invalid observations---including cloud-obscured pixels, nighttime acquisitions, and detector saturation---are masked using the product's quality flags, and missing values are filled via temporal interpolation.

The study domain is a $15 \times 15$~km square region resampled to a standardized $64 \times 64$ pixel grid at fine resolution, centered on the Sierra Nevada mountain range near Lake Tahoe, California ($38.90^\circ$N, $120.10^\circ$W). This region exhibits pronounced seasonal snow dynamics governed by elevation-dependent temperature gradients, orographic precipitation, and solar insolation, producing characteristic spatial patterns of snowline advance and retreat~\cite{dozier2016estimating}. Data are retrieved via Google Earth Engine for December--May snow seasons across multiple years. Five seasons (2020--2024) serve as simulation priors---each year constituting a distinct ``simulation'' trajectory of the same spatial domain under varying meteorological forcing---while the 2025 season serves as the held-out ground truth test year. This experimental design eliminates spatial domain shift between training and test data: all years share identical terrain, land cover, and sensor geometry, with inter-annual variability in precipitation and temperature patterns providing the sim2real gap that LAPIS-SHRED must bridge.

Each seasonal trajectory is trimmed to its active snow-retreat period using a Snow-Covered Area Fraction (SCAF) criterion~\cite{metsamaki2005feasible}, yielding variable-length trajectories (e.g. $T = 37$ for the 2025 ground truth season) that isolate the dynamically active melting period. Additional data generation details are provided in Appendix~\ref{app:ndsi_data}.

\textbf{Dynamics.} We employ the NDSI snow-cover dataset: MODIS-derived daily snow-cover fields over a $64 \times 64$ grid in the Sierra Nevada, with five training years (2020--2024) and the 2025 season as ground truth.

\textbf{LAPIS-SHRED Configuration.} We employ the \emph{Seq2Seq} mode on $K = 5$ observed seasonal trajectories, each trimmed to its active snow-melting period via the SCAF criterion. We use $p = 64$ stratified sensors and a Seq2Seq temporal model.

We perform two experiments on this dataset:

\textbf{(i) Forward inference} from a short initial window. The first 5 frames of the 2025 ground truth sensor sequence (approximately 7\% of the trajectory) are provided, and LAPIS-SHRED predicts the remaining 32 frames of snow-cover evolution. Over the full trajectory, LAPIS-SHRED achieves RMSE of $0.266$ and SSIM of $0.405$, compared to the SHRED baseline RMSE of $0.123$ and SSIM of $0.624$.

\textbf{(ii) Backward inference} from a single terminal frame. Only the final frame of the 2025 ground truth is provided, and static padding (Section~\ref{sec:encoding}) is used to encode the terminal latent state. LAPIS-SHRED then reconstructs the entire preceding seasonal trajectory. Over the full trajectory, LAPIS-SHRED achieves RMSE of $0.207$ and SSIM of $0.495$, yielding NRMSE of $0.130$. This outperforms the forward inference configuration, consistent with the physical structure of the problem: the terminal frame (late spring, near-complete snowmelt) provides a strong constraint on the preceding melt progression, whereas the initial frame (early winter, full snow cover) constrains the subsequent accumulation less tightly due to the stochastic nature of melting events.

Figures~\ref{fig:ndsi_forward} and~\ref{fig:ndsi_backward} present the forward and backward reconstruction results, respectively. The NDSI data preview is provided in Appendix~\ref{app:ndsi_data}.

\begin{figure}[t]
\centering
\includegraphics[width=\columnwidth]{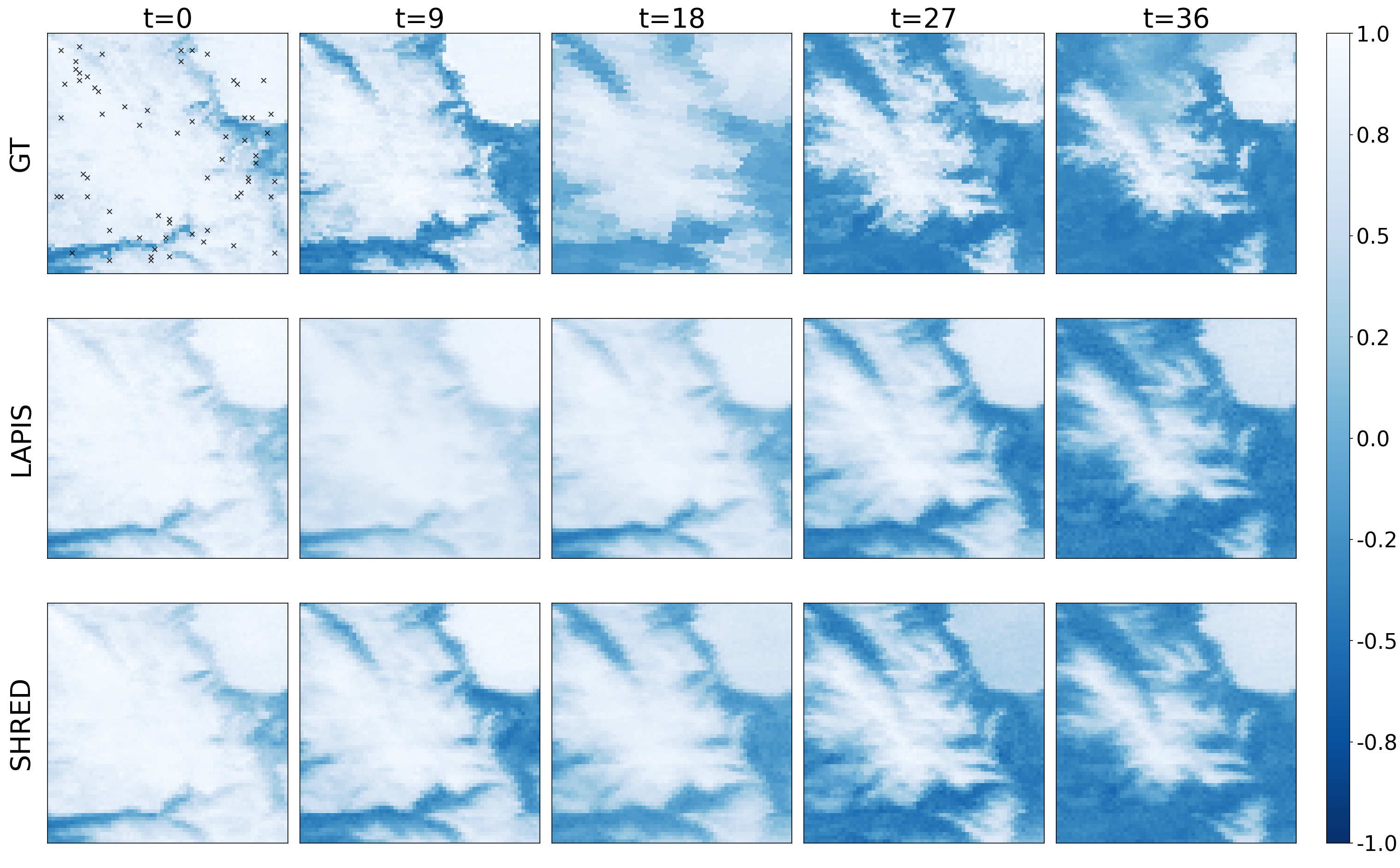}
\caption{LAPIS-SHRED forward inference on the 2D NDSI dataset. Ground truth for the 2025 snow season (top row), LAPIS-SHRED prediction from the initial 5 frames (second row), and SHRED baseline using the full sensor time-series (third row), shown at selected time steps.}
\label{fig:ndsi_forward}
\end{figure}

\begin{figure}[t]
\centering
\includegraphics[width=\columnwidth]{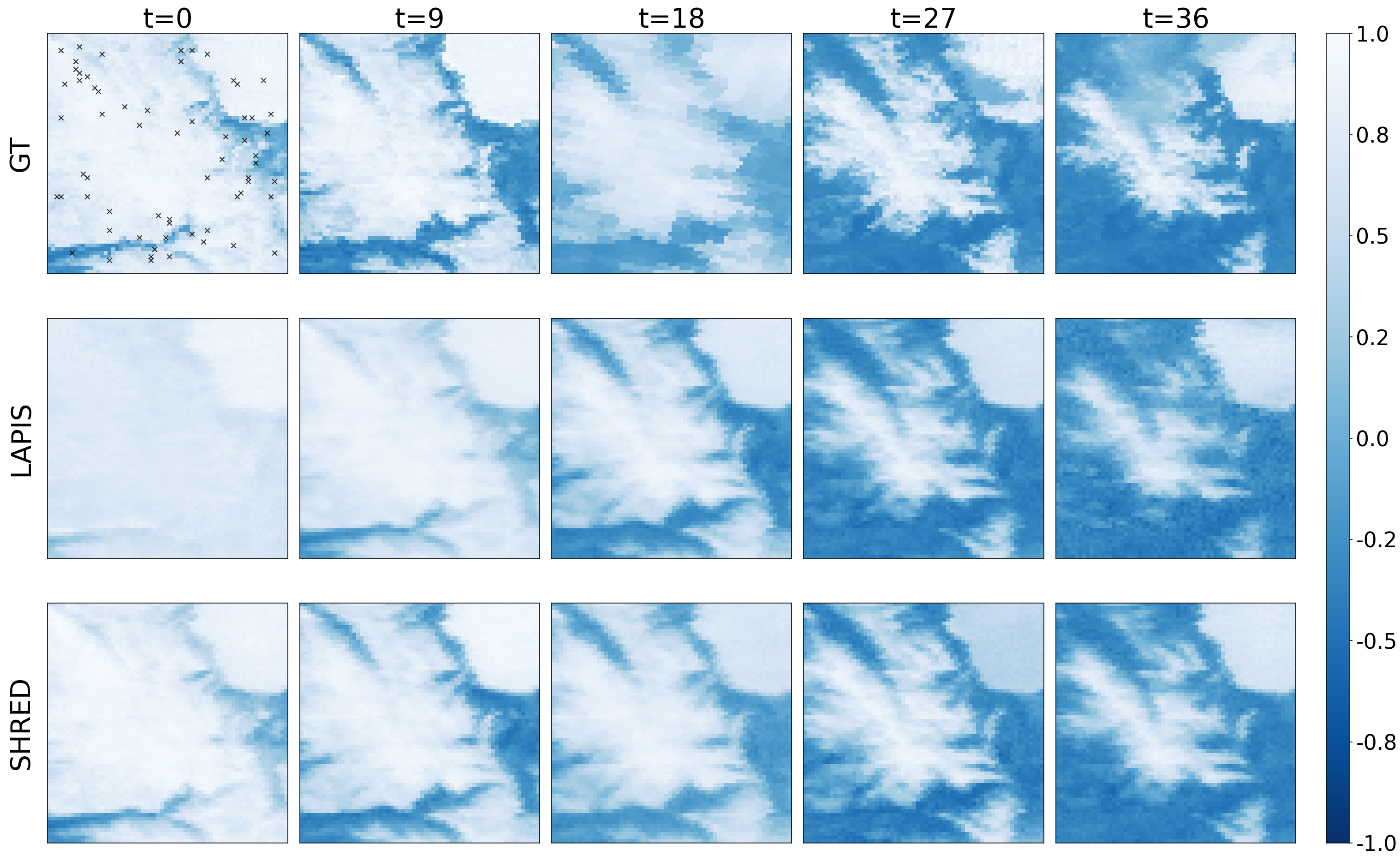}
\caption{LAPIS-SHRED backward inference on the 2D NDSI dataset from a single terminal frame using static padding. Ground truth (top row), LAPIS-SHRED reconstruction (second row), and SHRED baseline using the full sensor time-series (third row), shown at selected time steps.}
\label{fig:ndsi_backward}
\end{figure}

\subsection{Ablation Study on NDSI Data}
\label{sec:ablation}

To understand how each architectural parameter affects LAPIS-SHRED performance, we conduct an ablation study on the NDSI snow-cover
experiment (Section~\ref{sec:ndsi_results}). The axes are chosen to isolate individual components in the error decomposition of Eq.~\ref{eq:error_decomposition}. Figure~\ref{fig:ablation} presents the results.

\begin{figure*}[t]
\centering
\begin{subfigure}[b]{\textwidth}
\centering
    \includegraphics[width=0.8\textwidth]{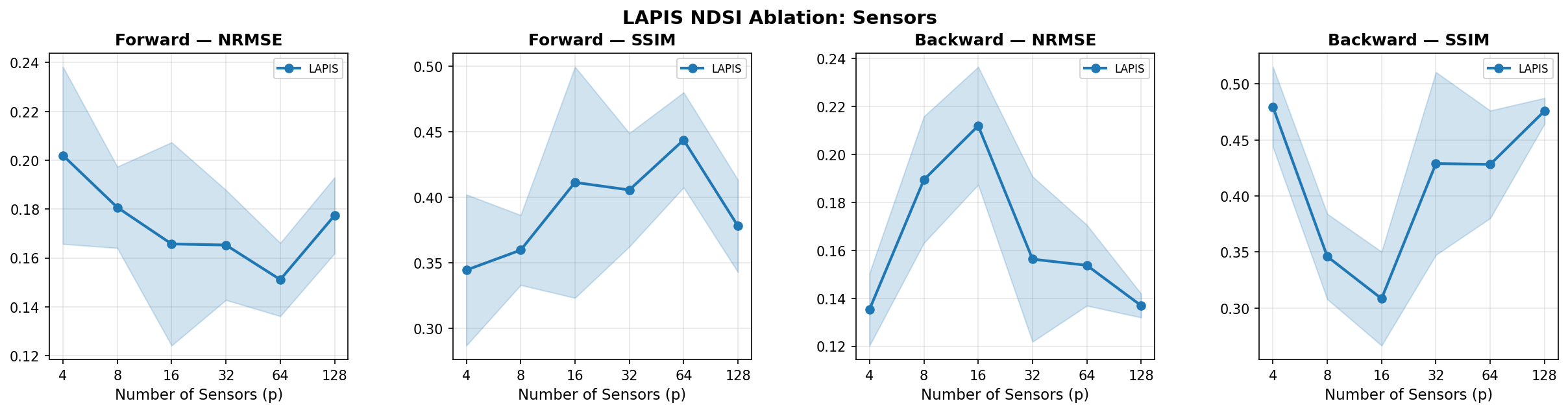}
    \caption{Effect of number of sensors $p$ on LAPIS-SHRED.}
    \label{fig:ablation_sensors}
\end{subfigure}
\begin{subfigure}[b]{\textwidth}
\centering
    \includegraphics[width=0.8\textwidth]{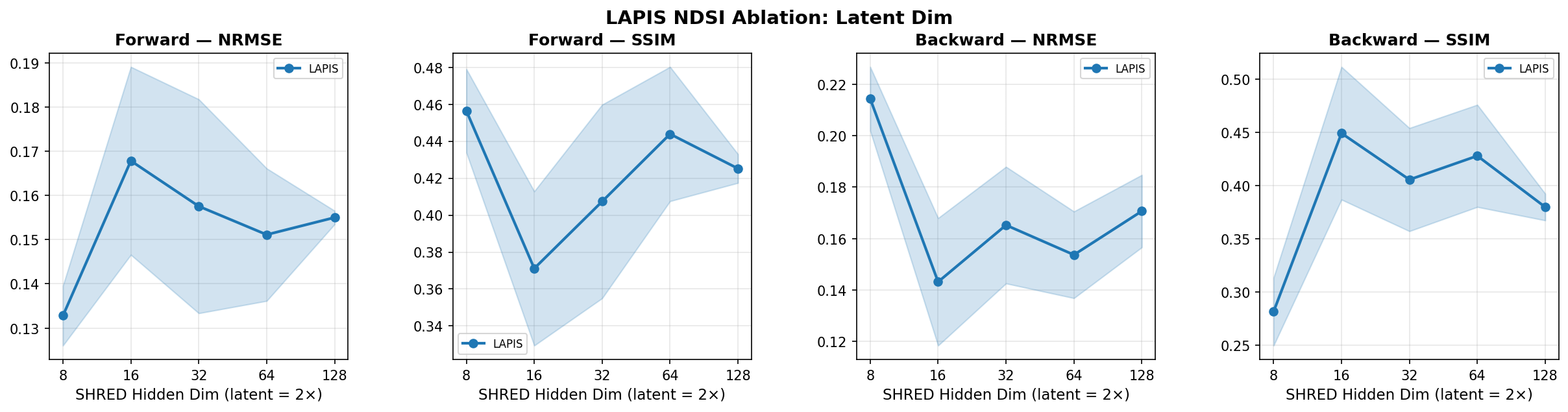}
    \caption{Effect of spatial module latent dimension $d_z$ on LAPIS-SHRED.}
    \label{fig:ablation_latent_dim}
\end{subfigure}
\begin{subfigure}[b]{\textwidth}
\centering
    \includegraphics[width=0.8\textwidth]{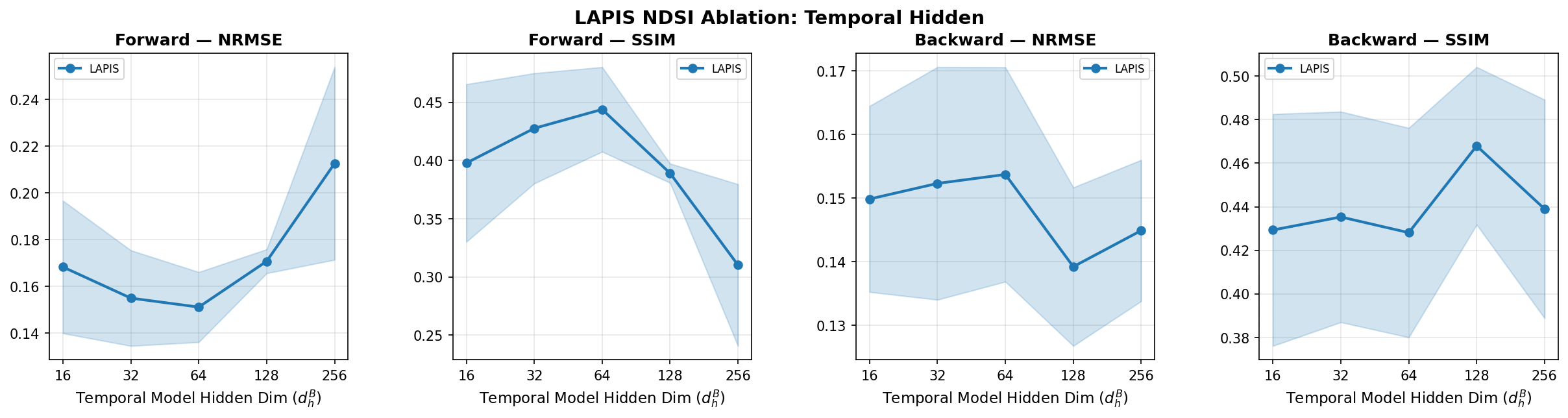}
    \caption{Effect of temporal model hidden dimension $d_h^B$ on LAPIS-SHRED.}
    \label{fig:ablation_temporal_hidden}
\end{subfigure}

\vspace{0.3em}
\begin{subfigure}[b]{0.4\textwidth}
    \includegraphics[width=\textwidth]{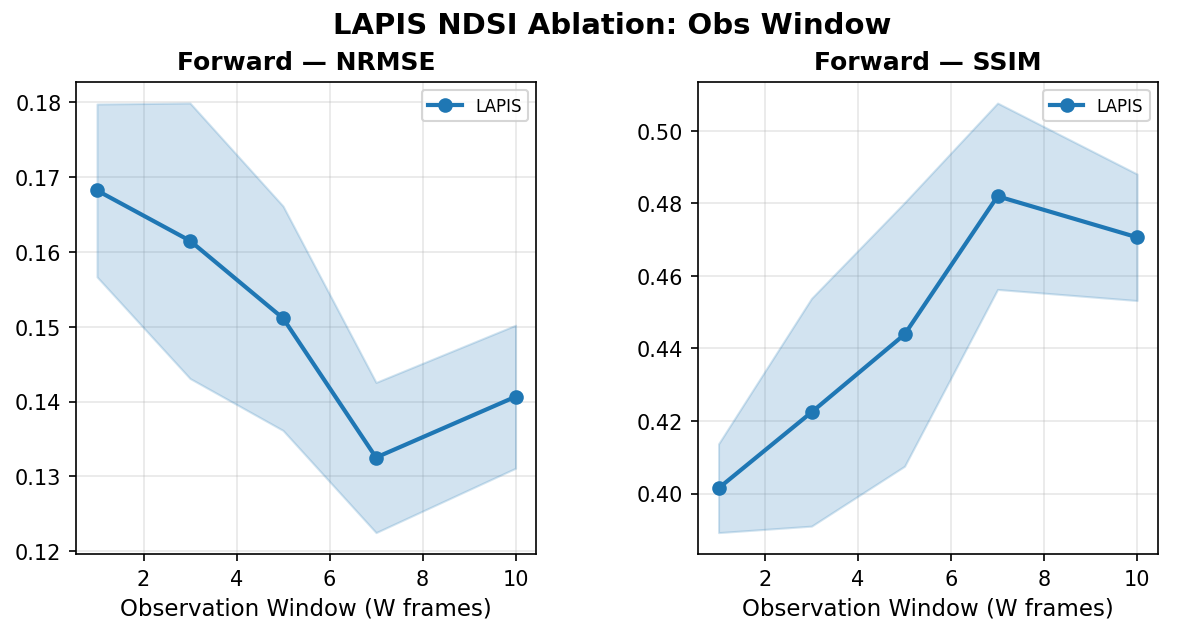}
    \caption{Effect of observation window length $W$\\ on forward inference.}
    \label{fig:ablation_obs_window}
\end{subfigure}
\begin{subfigure}[b]{0.4\textwidth}
    \includegraphics[width=\textwidth]{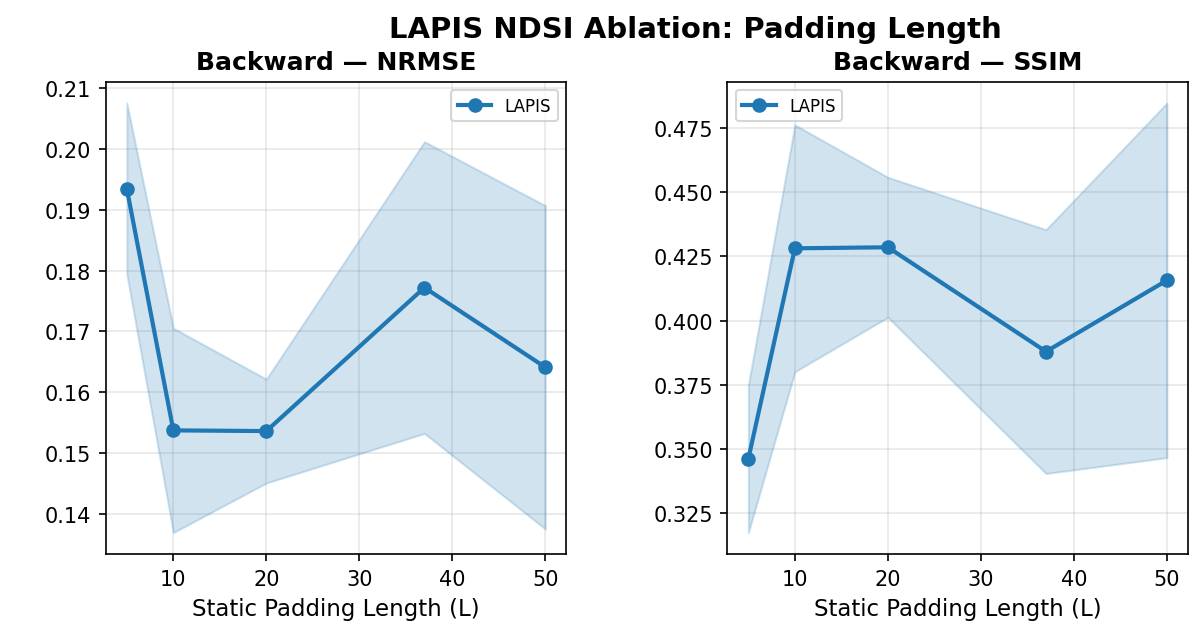}
    \caption{Effect of padding length $L$ on backward inference.\\ \ }
    \label{fig:ablation_padding_length}
\end{subfigure}
\caption{Ablation study on the NDSI dataset. Panels~(a)--(c) report both forward and backward NRMSE and SSIM;
panels~(d) and~(e) are direction-specific. Solid lines denote the mean over three random seeds; shaded bands
indicate $\pm 1$ standard deviation.}
\label{fig:ablation}
\end{figure*}

\textbf{Number of sensors}
Figure~\ref{fig:ablation_sensors} varies the number of sensors $p \in \{4, 8, 16, 32, 64, 128\}$, probing $\varepsilon_{\text{enc}}$ and $\varepsilon_{\text{dec}}$. Both directions show error reduction with increasing $p$, saturating beyond $p \approx 32$. Backward inference exhibits greater performance fluctuation in the low-sensor regime, reflecting the sensitivity of single-frame terminal encoding to spatial sensor configuration.

\textbf{Spatial module latent dimension}
Figure~\ref{fig:ablation_latent_dim} sweeps $d_h^{\text{SHRED}} \in \{8, 16, 32, 64, 128\}$, probing $\varepsilon_{\text{enc}}$ and $\varepsilon_{\text{dec}}$. Forward inference is more robust to small latent dimensions, as the multi-frame observation window provides richer conditioning that partially compensates for reduced per-frame capacity. Performance stabilizes for $d_h^{\text{SHRED}} \geq 32$ in both directions.

\textbf{Temporal model hidden dimension}
Figure~\ref{fig:ablation_temporal_hidden} sweeps $d_h^B \in \{16, 32, 64, 128, 256\}$, probing $\varepsilon_{\text{temp}}(t)$. Forward inference exhibits a clear capacity--generalization trade-off: performance improves up to $d_h^B = 64$ but degrades substantially at $d_h^B = 256$ with increased variance, consistent with overfitting behavior. Backward inference is comparatively insensitive to temporal model capacity, reflecting the stronger physical constraint in dissipative snow-melt dynamics.

\textbf{Observation window length (forward)}
Figure~\ref{fig:ablation_obs_window} varies $W \in \{1, 3, 5, 7, 10\}$ for forward inference, probing $\varepsilon_{\text{enc}}$. Performance improves monotonically from $W = 1$ to $W = 7$, confirming that longer sensor histories yield more faithful embeddings consistent with Takens' theorem.

\textbf{Padding length (backward)}
Figure~\ref{fig:ablation_padding_length} varies $L \in \{5, 10, 20, 37, 50\}$ for backward inference, probing $\varepsilon_{\text{enc}}$ via the terminal encoding strategy of Section~\ref{sec:encoding}. Performance peaks at $L \in [10, 20]$ and degrades for both shorter and longer padding. Too-short padding provides insufficient context for the temporal unit, while too-long padding amplifies the bias from training.

\section{Discussion}
\label{sec:discussion}

In this section, we summarize the experimental results from Section~\ref{sec:results}, connect the empirical observations to the theoretical analysis developed in Appendix~\ref{app:theoretical_foundations}, and discuss the broader implications and generalization potential of the LAPIS-SHRED framework.

\subsection{Summary of Results}
\label{sec:results_summary}

The six experiments demonstrate several key aspects of the LAPIS-SHRED framework:

\textbf{SHRED modularity.} LAPIS-SHRED interfaces seamlessly with frame-by-frame SHRED (2DKS, 2DKF), Seq2Seq SHRED (2D KVS, 1D RDE, NDSI), and multiscale settings (HF-RDE) without modification to the temporal inference stages. This confirms the design principle of Section~\ref{sec:frozen_decoder}: freezing the pre-trained spatial decoder and operating the temporal model entirely in latent space yields a modular architecture in which the spatial reconstruction and temporal inference components can be developed, validated, and replaced independently.

\textbf{Bidirectional temporal inference.} Both backward reconstruction (from terminal windows: 2DKS, 2DKF, 2D KVS-Bwd, 1D RDE, NDSI-Bwd) and forward prediction (from initial windows: 2D KVS-Fwd, HF-RDE, NDSI-Fwd) are demonstrated, with the Seq2Seq temporal model for fixed-length inference and the autoregressive model for open-ended forecasting.

\textbf{Diverse physical regimes.} LAPIS-SHRED operates effectively across spatiotemporally chaotic systems (2DKS, 2DKF), periodic vortex-dominated flows (2D KVS), multiscale propulsion physics with substantial sim-to-real gaps (HF-RDE), volatile combustion transients (1D RDE), and satellite-derived environmental fields with year-to-year variability (NDSI). The successful application to chaotic systems---where positive Lyapunov exponents preclude long-horizon deterministic prediction---demonstrates that LAPIS-SHRED does not require dissipative or equilibrium-convergent dynamics; the simulation prior and short observation window together provide sufficient regularization for latent-space temporal inference even when the underlying dynamics are chaotic.

\textbf{Extreme data efficiency.} Across all experiments, LAPIS-SHRED reconstructs the full spatiotemporal trajectory from only a short observation window---ranging from 7\% (NDSI forward) to 20\% (1D RDE) of the temporal data---using sparse sensor measurements that cover only a small fraction of the spatial domain, while maintaining consistently low NRMSE. Notably, LAPIS-SHRED performance closely approaches that of the corresponding pre-trained SHRED model when it is provided the full temporal sensor history, suggesting that latent-space inference enables accurate trajectory recovery even under severe temporal observation constraints.

\subsection{Baseline Comparisons}
\label{sec:baseline_comparisons}

We contextualize LAPIS-SHRED performance through direct baselines
evaluated under identical conditions and a discussion of broader
methodological classes.

\subsubsection{Performance Relative to SHRED with Full Temporal Observations}
\label{sec:shred_upper_bound}

A natural reference for LAPIS-SHRED is the pre-trained SHRED model
itself when provided the \emph{complete} ground-truth sensor
time-series---an idealized setting in which no temporal sparsity
exists. Figure~\ref{fig:barplot} compares NRMSE and SSIM across all
experiments. Despite operating in an extremely sparse temporal
regime---observing only 7--20\% of the trajectory in general, and in the most extreme case reconstructing an entire
seasonal evolution from a single terminal frame---LAPIS-SHRED
consistently approaches the full-observation upper bound. On the
chaotic PDE benchmarks (2D~KS, 2D~KF) and the combustion physics experiments (HF-RDE, 1D~RDE), LAPIS-SHRED NRMSE remains within 15\% of the SHRED baseline that observes the
complete temporal record. The largest gap appears on the
NDSI dataset, where inter-seasonal variability introduces a
substantial sim-to-real discrepancy; even there, LAPIS-SHRED
recovers the seasonal snow-melt trajectory with NRMSE of $0.130$
(backward from a single frame) and $0.167$ (forward from 5 frames).
These results confirm that latent-space temporal inference recovers
the vast majority of the information contained in a full temporal
sensor record, even under severe observation constraints.

\begin{figure*}[t]
\centering
\includegraphics[width=\textwidth]{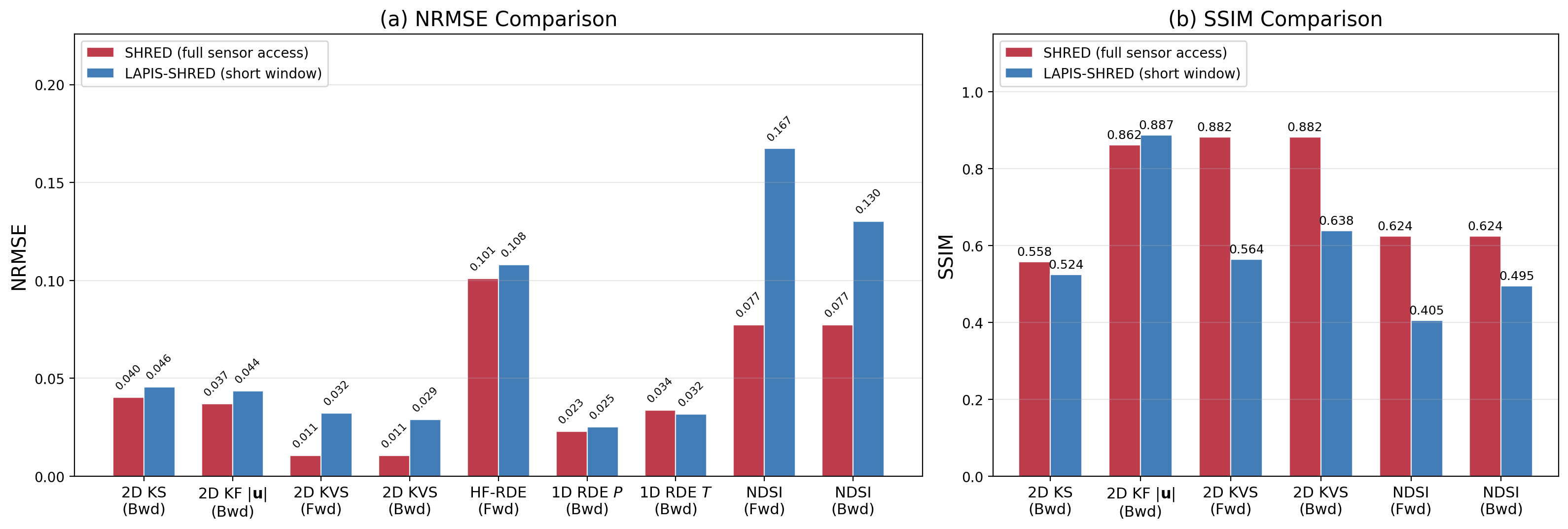}
\caption{LAPIS-SHRED (short observation window) versus SHRED with
full sensor access across all experiments. (a)~NRMSE; lower is
better. (b)~SSIM (where computed); higher is better. LAPIS-SHRED
closely approaches the full-observation baseline despite observing
only 7--20\% of the temporal domain.}
\label{fig:barplot}
\end{figure*}

\subsubsection{SHRED-ROM Baseline}
\label{sec:shredrom_baseline}

SHRED-ROM~\cite{tomasetto2025reduced} combines an LSTM temporal
unit with a shallow decoder mapping to POD coefficients, providing a
compressive reduced-order modeling framework from sparse sensors. As
the most directly comparable existing method, it shares the
foundational principle of sequence modeling of sensor
time-histories for spatial state reconstruction. However, SHRED-ROM does not include a dedicated temporal dynamics model; its reconstruction at each timestep depends on sensor
observations being available at that timestep or in its immediate
lag window. When sensor data is absent beyond a short observation
window, two ad hoc strategies can be employed: (i)~autoregressive sensor-space forecasting, where the model feeds its own predicted sensor values
back as input, and (ii)~ensemble-interpolated sensor construction,
where weights fit from the observed window synthesize sensor
readings from training trajectories. Neither strategy supports
backward reconstruction from terminal observations, as the LSTM processes sensor histories sequentially in the forward temporal direction, and no reverse-time counterpart is provided.

We evaluate both strategies on the HF-RDE and NDSI forward experiments under identical sensor configurations and observation budgets. Table~\ref{tab:baseline_comparison} summarizes the results.
On the HF-RDE dataset, SHRED-ROM achieves comparable aggregate RMSE
but fails to resolve the detonation wavefront structure
(Figure~\ref{fig:shredrom_rde}), producing smoothed fields that lack
the injector physics captured by LAPIS-SHRED. On the NDSI
dataset, both SHRED-ROM strategies yield substantially higher NRMSE
than LAPIS-SHRED. Figures~\ref{fig:shredrom_ndsi} and~\ref{fig:shredrom_ndsi_interp} present the qualitative comparisons.

\begin{table}[t]
\centering
\caption{SHRED-ROM baseline on HF-RDE and NDSI forward
experiments. AR: autoregressive forecast; int.: ensemble-interpolated
sensors. All methods use identical sensors and observation windows.}
\label{tab:baseline_comparison}
\renewcommand{\arraystretch}{1.1}
\small
\begin{tabular}{@{}llccc@{}}
\hline
\textbf{Expt.} & \textbf{Method}
  & \textbf{RMSE} & \textbf{NRMSE} & \textbf{SSIM} \\
\hline
\multirow{2}{*}{\shortstack[l]{HF-\\RDE}}
  & SHRED-ROM (AR)  & 0.110 & 0.117 & --- \\
  & LAPIS-SHRED     & \textbf{0.108} & \textbf{0.114} & --- \\
\hline
\multirow{3}{*}{\shortstack[l]{NDSI\\Fwd}}
  & SHRED-ROM (AR)  & 0.322 & 0.214 & 0.325 \\
  & SHRED-ROM (int.)& 0.304 & 0.191 & 0.391 \\
  & LAPIS-SHRED     & \textbf{0.266} & \textbf{0.167} & \textbf{0.405} \\
\hline
\end{tabular}
\end{table}

\begin{figure}[t]
\centering
\includegraphics[width=\columnwidth]{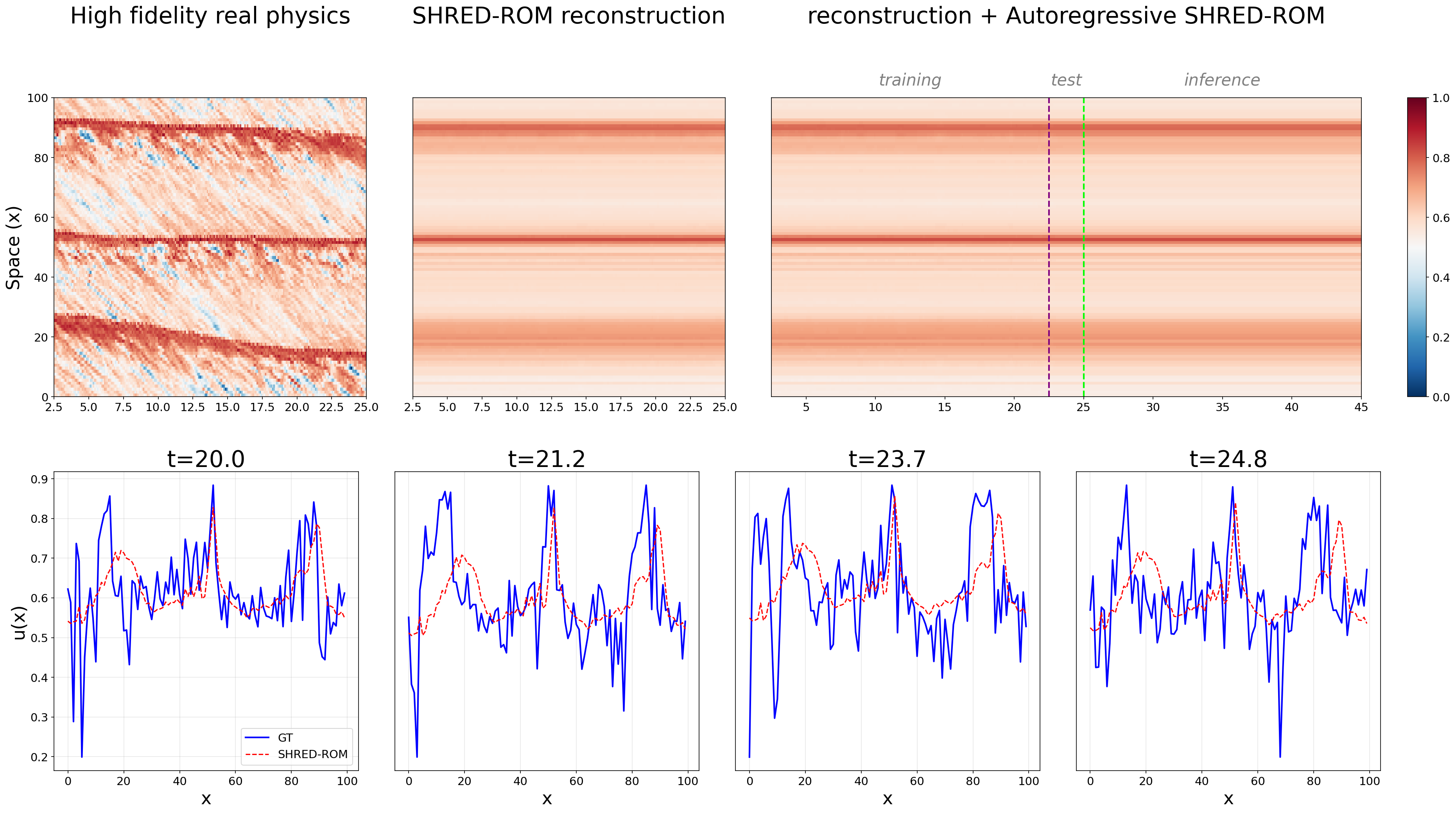}
\caption{SHRED-ROM baseline on the HF-RDE dataset. Top row: ground
truth (left), SHRED-ROM reconstruction (center), and autoregressive
forward rollout (right). Bottom row: spatial snapshots. Compare with
Figure~\ref{fig:rde_extrapolation}.}
\label{fig:shredrom_rde}
\end{figure}

\begin{figure}[t]
\centering
\includegraphics[width=\columnwidth]{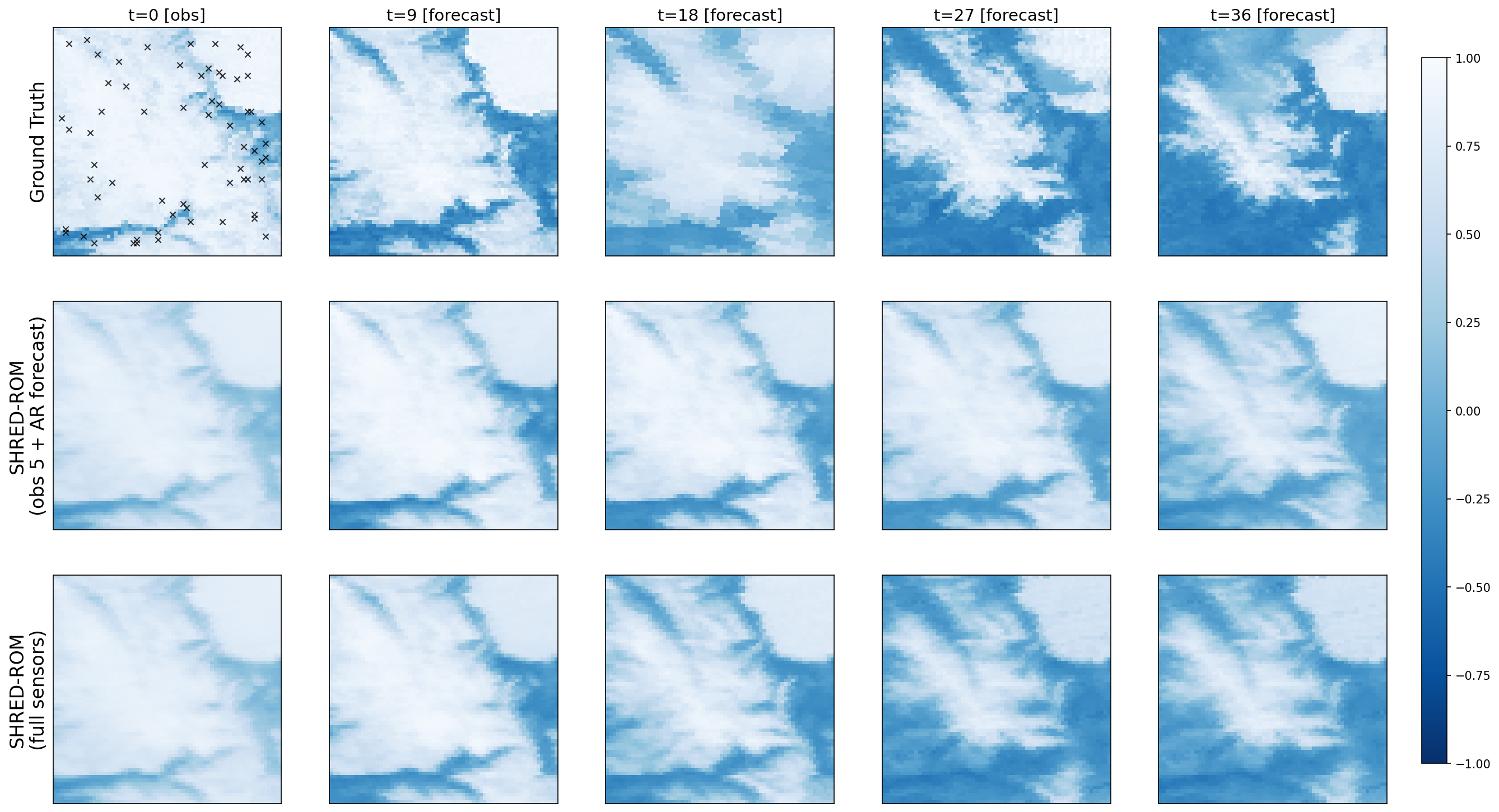}
\caption{SHRED-ROM baseline on the NDSI forward experiment. Ground
truth (top), autoregressive forecast from 5 observed frames (middle),
and SHRED-ROM with full sensor time-series access (bottom). Compare with
Figure~\ref{fig:ndsi_forward}.}
\label{fig:shredrom_ndsi}
\end{figure}

\begin{figure}[t]
\centering
\includegraphics[width=\columnwidth]{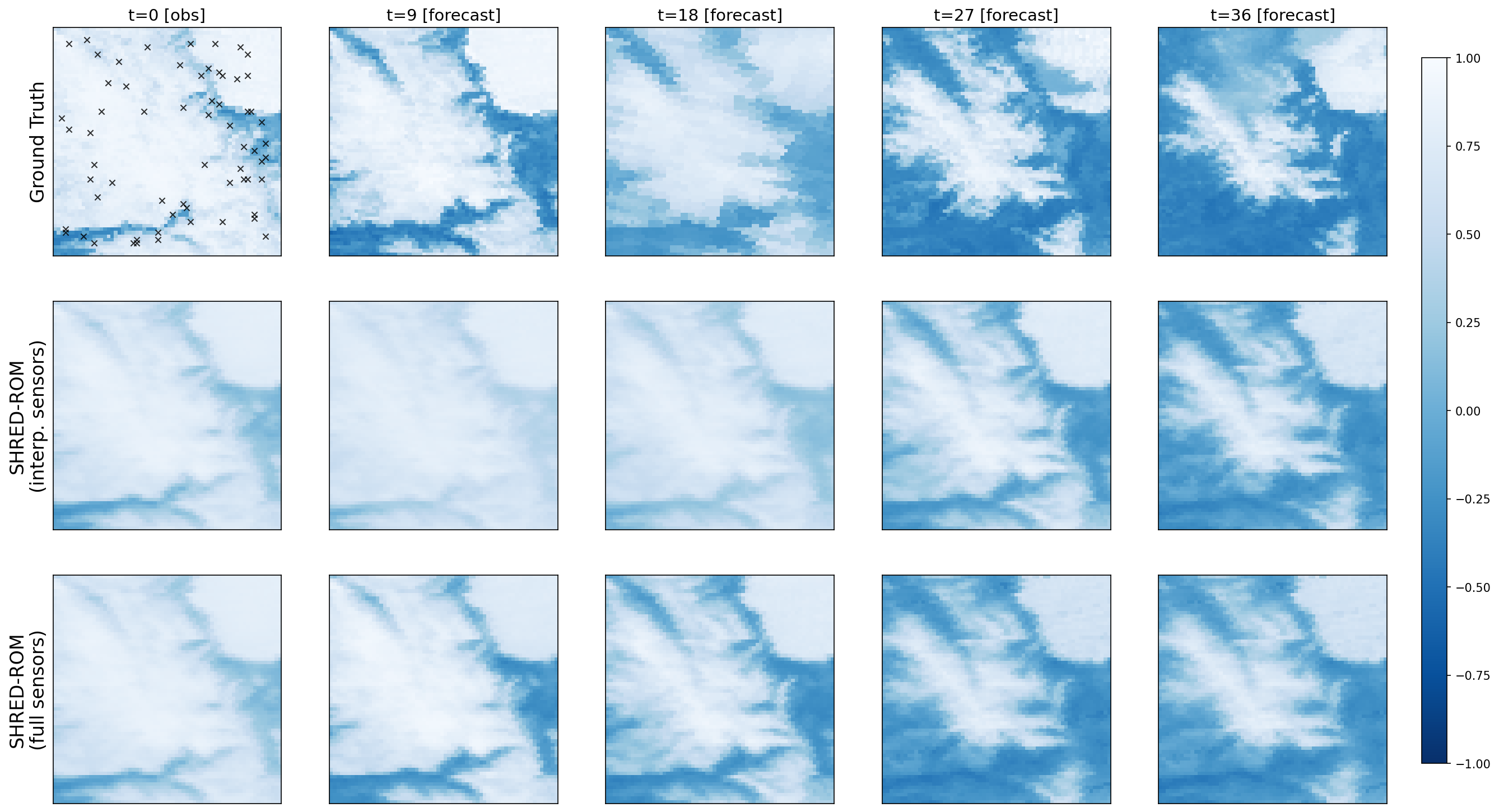}
\caption{SHRED-ROM baseline (ensemble-interpolated) on the NDSI
forward experiment. Ground truth (top), reconstruction from
ensemble-interpolated sensor series (middle), and SHRED-ROM with
full sensor time-series access (bottom). Compare with
Figure~\ref{fig:ndsi_forward}.}
\label{fig:shredrom_ndsi_interp}
\end{figure}

\begin{table}[t]
\centering
\caption{Capability comparison of LAPIS-SHRED and related methods.
\ding{51} = supported; \ding{55} = not supported.}
\label{tab:capability}
\renewcommand{\arraystretch}{1.15}
\small
\begin{tabular}{@{}lcccc@{}}
\hline
\textbf{Method}
  & \rotatebox{70}{\shortstack{Sparse spatial\\sensing}}
  & \rotatebox{70}{\shortstack{Temporal seq.\\encoding}}
  & \rotatebox{70}{\shortstack{Forward\\inference}}
  & \rotatebox{70}{\shortstack{Backward\\inference}} \\
\hline
LAPIS-SHRED        & \ding{51} & \ding{51} & \ding{51} & \ding{51} \\
SHRED-ROM$^\dag$   & \ding{51} & \ding{51} & \ding{51} & \ding{55} \\
PDS                & \ding{51} & \ding{55} & \ding{55} & \ding{55} \\
POD-AE-SE          & \ding{51} & \ding{55} & \ding{55} & \ding{55} \\
POD-DeepONet$^\ddag$ & \ding{51} & \ding{55} & \ding{55} & \ding{55} \\
\hline
\end{tabular}
\vspace{0.3em}
 
\raggedright\footnotesize
$^\dag$Limited forward prediction is possible via autoregressive sensor-space rollout (Section~\ref{sec:shredrom_baseline}), but
SHRED-ROM does not include a dedicated temporal dynamics model.\\
$^\ddag$Standard POD-DeepONet maps explicit parameters to reduced coefficients; adapted to sensor inputs in the SHRED-ROM comparison~\cite{tomasetto2025reduced}.
\end{table}
 
\begin{figure}[t]
\centering
\includegraphics[width=0.85\columnwidth]{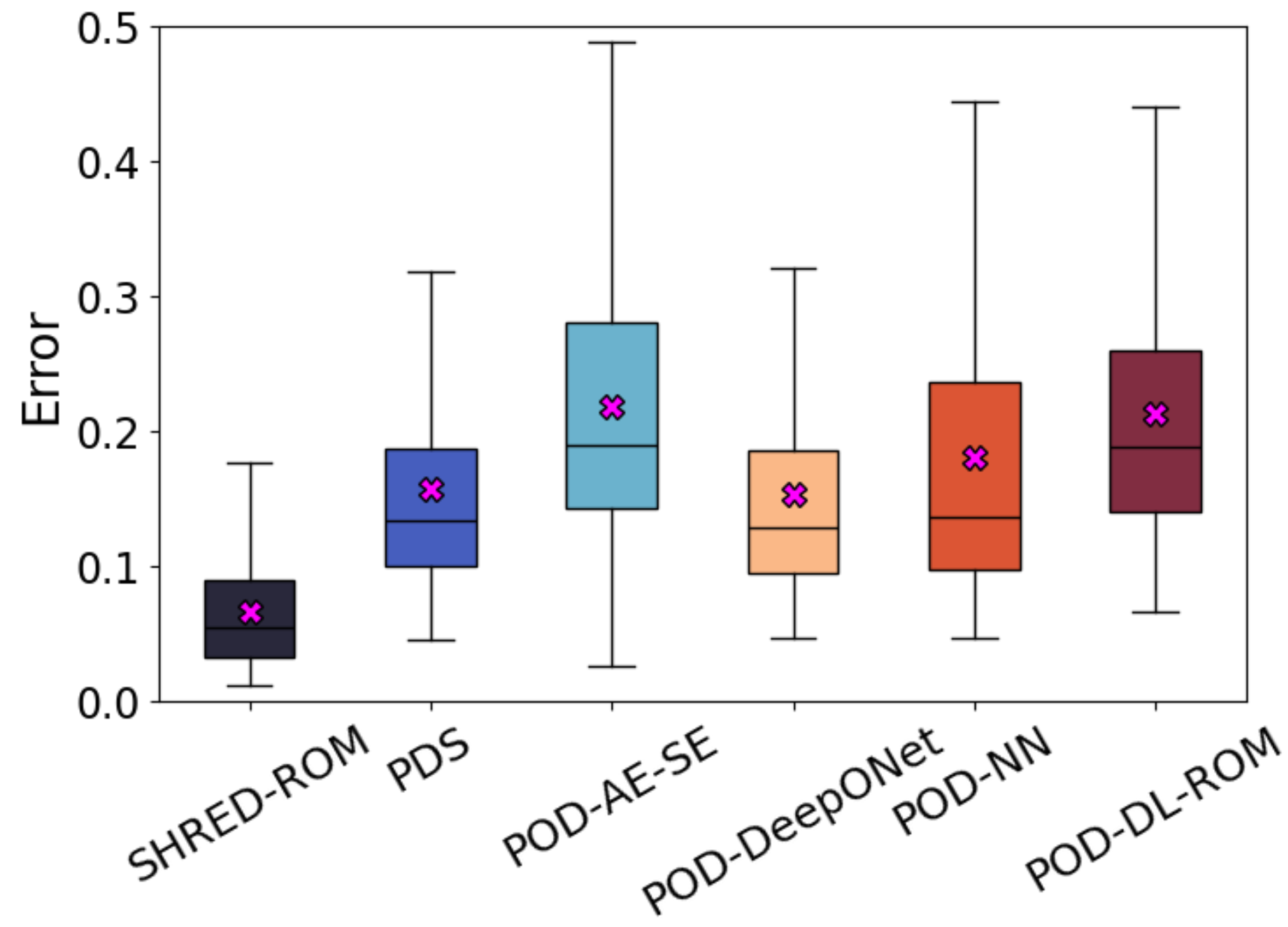}
\caption{Distribution of relative reconstruction errors on the fluid-around-an-obstacle benchmark (reproduced from Tomasetto et
al.~\cite{tomasetto2025reduced}, Figure~7b). SHRED-ROM outperforms all competing methods on the reconstruction task alone; the temporal inference setting addressed by LAPIS-SHRED introduces additional challenges that none of these methods are designed to handle.}
\label{fig:shredrom_comparison}
\end{figure}

\subsubsection{Neural Operator Methods}
\label{sec:neural_operator_discussion}
 
Table~\ref{tab:capability} compares the architectural capabilities of LAPIS-SHRED against related methods, including those benchmarked in the SHRED-ROM study~\cite{tomasetto2025reduced}. Only LAPIS-SHRED supports all four capabilities: spatially sparse sensing, temporal sequence encoding, and both forward and backward temporal inference from short observation windows. SHRED-ROM shares the first two but lacks a dedicated temporal dynamics model for inference beyond the observation window. PDS~\cite{nair2020leveraging} and POD-AE-SE~\cite{luo2023flow} perform one-shot reconstruction from sensor snapshots without temporal encoding, while POD-DeepONet~\cite{lu2022comprehensive} in its standard form maps explicit parameters to reduced coefficients rather than operating from sensor readings.
 
Importantly, even on the reconstruction task alone---where all methods have access to full sensor histories---SHRED-ROM already outperforms PDS, POD-AE-SE, and POD-DeepONet in accuracy, training efficiency, and data requirements, as demonstrated in the comparative evaluation of Tomasetto et
al.~\cite{tomasetto2025reduced} (Figure~\ref{fig:shredrom_comparison}). The temporal inference setting addressed by LAPIS-SHRED is strictly harder: the model must reconstruct or predict the full spatiotemporal trajectory from as few as 1--5 observed frames, a capability that none of these methods are designed to provide.
 
Beyond these specific methods, neural operator frameworks more broadly---including FNO~\cite{li2020fourier} and DeepONet~\cite{lu2021learning}---face three structural mismatches with the LAPIS-SHRED setting. First, standard FNO requires full
spatial fields as input; extensions such as RecFNO~\cite{zhao2024recfno} address spatial sparsity through specialized embeddings but operate on single-timestep snapshots without a temporal propagation mechanism. One could in principle combine such an extension with an autoregressive time-stepper, though to our knowledge this has not been explored for temporal extrapolation from extremely short observation windows; even if realized, the rollout would accumulate errors in the full-field space at every intermediate step, unlike LAPIS-SHRED's latent-space propagation where the Seq2Seq model produces complete trajectories in a single forward pass. Second, extensions such as S-DeepONet~\cite{he2024sequential} that embed recurrent units in the branch network are designed for loading histories that are dense in time and sparse in space---also not the regime considered here. Third, neither FNO nor DeepONet provides a native mechanism for backward temporal inference; FNO's autoregressive rollout processes time sequentially in the forward direction, and DeepONet's trunk-network parameterization does not distinguish temporal direction.
 
In summary, while FNO and DeepONet are powerful frameworks for operator learning, adapting them to the setting addressed by LAPIS-SHRED would require combining sparse spatial sensing ($p \ll n$), support for extreme temporal sparsity (as few as 1--5
observed frames), and bidirectional inference mechanisms---modifications that would substantially alter their architectures, converging toward the recurrent unit and latent-space temporal dynamics design that LAPIS-SHRED provides.

\subsection{Error Analysis}
\label{sec:why_lapis_works}

The effectiveness of LAPIS-SHRED rests on a modular error structure that 
separates encoding fidelity, temporal model accuracy, decoder 
reconstruction, and the simulation--reality gap into independently 
controllable components. In Appendix~\ref{app:theoretical_foundations}, we formalize this decomposition and develop explicit error bounds for 
the analytically tractable case of backward inference in dissipative 
systems converging to stable equilibria 
(Theorem~\ref{thm:nonlinear_dissipative}). While the case study is 
developed in detail for its analytical clarity, the general four-term 
error decomposition (Eq.~\ref{eq:error_decomposition}) applies broadly 
to all LAPIS-SHRED configurations---forward or backward, frame-by-frame or 
Seq2Seq, single-scale or multi-scale---with each configuration instantiating the temporal model error 
$\varepsilon_{\mathrm{temp}}(t)$ differently depending on the dynamical 
regime and observation constraint.

\subsection{Broader Implications and Generalization}
\label{sec:broader_implications}

By supporting both forward and backward inference, accommodating chaotic and dissipative dynamical regimes, and interfacing with diverse SHRED structures, LAPIS-SHRED constitutes a general-purpose latent-space temporal inference architecture applicable wherever dense temporal observations are unavailable but simulation-derived priors exist. We identify in below section several additional candidate application domains beyond our demonstrated experiments.

\subsubsection{Candidate Application Domains}
\label{sec:applications}

\textbf{Solidification and phase transformation.} Casting, welding, and additive manufacturing produce final solidified microstructures observable post-process, while the transient thermal and phase-field evolution during processing is often inaccessible~\cite{karma2001phase,debroy2018additive}. Backward LAPIS-SHRED inference from post-process measurements could reconstruct the thermal history, informing defect analysis and process optimization. When process sensors are available only during an initial heating or melting phase, forward inference could predict the subsequent solidification trajectory.

\textbf{Ecological and environmental monitoring.} Seasonal ecological processes---vegetation phenology, wildfire progression, algal bloom dynamics~\cite{weiss2020remote}---share a similar structure: physics-informed or empirical models can generate plausible trajectory ensembles, while satellite revisit constraints or field restrictions limit observations to narrow temporal windows. Bidirectional inference enables both reconstruction of missed early-season dynamics and forecasting of late-season evolution from partial records.

\textbf{Periodic and rhythmic systems.} Many biological and engineering systems exhibit stable periodic behavior: cardiac dynamics~\cite{glass2001synchronization}, circadian rhythms~\cite{goldbeter1995model}, and periodic chemical reactions~\cite{epstein1998introduction}. When only phase-averaged or stroboscopic observations are available, LAPIS-SHRED reconstruction could recover intra-cycle dynamics from measurements confined to a narrow phase window.

\textbf{Model predictive control with terminal constraints.} In model predictive control (MPC), a common objective is to steer a system toward a reference that may be a static setpoint or a periodic orbit~\cite{rawlings2020model}. When only the terminal specification is given, reconstructing the full optimal state trajectory becomes an inverse problem structurally analogous to backward LAPIS-SHRED inference. Similar architectures could learn mappings from terminal specifications to stabilizing control sequences, bypassing iterative online optimization---a form of amortized MPC~\cite{amos2018differentiable,chen2022large}. Conversely, when initial conditions and a short sensor window are available, forward LAPIS-SHRED inference could serve as a fast surrogate for the predictive model within an MPC loop.

\textbf{Structural health monitoring and forensic engineering.} Progressive failure in structures---fatigue crack growth, corrosion, seismic damage accumulation---produces observable final damage states while intermediate evolution may be unmonitored~\cite{farrar2012structural}. Backward LAPIS-SHRED inference from post-event deformations could reconstruct the transient loading sequences, informing forensic analysis of vehicle collisions or structural collapses.

\section{Summary and Future Directions}
\label{sec:future}

We introduced LAPIS-SHRED, a modular architecture that reconstructs or forecasts complete spatiotemporal dynamics from sparse sensor measurements confined to a short temporal window. Across six experiments spanning chaotic, periodic, transient and multiscale regimes, LAPIS-SHRED recovers full trajectories with consistently low NRMSE from observation windows as short as 7\% or less of the temporal domain and as few as 3 sensors, which closely approaches SHRED baselines which assume sensor observations of the entire time series. The framework supports both backward and forward inference, interfaces with diverse SHRED models, and accommodates extreme observation constraints including single-frame terminal inputs via padding. The modular design provides a general template applicable wherever simulation-derived priors exist but dense temporal observations are unavailable.

Several limitations and future directions remain. First, the framework produces point estimates; extending to calibrated uncertainty quantification---via ensemble predictions, probabilistic latent-space formulations, conformal methods~\cite{psaros2023uncertainty}, or engression~\cite{shen2025engression}---is essential for operational decision support. Second, the observation window is fixed a priori; coupling LAPIS-SHRED with active sensing strategies that optimize observation timing could improve data efficiency~\cite{manohar2018data}. Third, the current theoretical analysis focuses on dissipative systems; extending the error framework to chaotic regimes~\cite{ott2002chaos}, where positive Lyapunov exponents introduce fundamentally different information-theoretic constraints, and to the autoregressive forward setting with cumulative error characterization, would provide a more complete understanding. Fourth, integrating LAPIS-SHRED's temporal inference with multiscale data assimilation frameworks such as SENDAI~\cite{zhang2026sendai} could enable joint state estimation and temporal extrapolation under domain shift, simultaneously addressing the simulation-to-reality gap in spatial reconstruction and the extreme temporal sparsity in observation access. Finally, integrating LAPIS-SHRED with emerging multi-physics foundation models~\cite{mccabe2023multiple} that pretrain across diverse PDE families could replace per-system training with a shared latent-space prior learned across diverse physics, enabling rapid deployment to new dynamical systems.

\section*{Acknowledgments}
The authors were supported in part by the US National Science Foundation (NSF) AI Institute for Dynamical Systems (dynamicsai.org), grant 2112085.  JNK further acknowledges support from the Air Force Office of Scientific Research (FA9550-24-1-0141).

\section*{Code and Data Availability}
Our codebase is available at \url{https://github.com/xswzaqnjimko/LAPIS-SHRED}.

\bibliography{refs}
\bibliographystyle{IEEEtran}

\clearpage

\appendices

\section{Experimental Details: Data Generation and Hyperparameters}
\label{app:experiments}

\subsection{2D Kuramoto--Sivashinsky}
\label{app:2dks}
The 2DKS data is generated using an ETDRK4 pseudo-spectral solver on a
$64 \times 64$ doubly-periodic domain of size $L = 16\pi$,
with $\Delta t = 0.05$, saving every 5 steps ($\Delta t_{\text{save}} = 0.25$)
for a total integration time of $T = 25.0$, yielding 101 snapshots per
trajectory. $K = 8$ simulation trajectories are generated from random
initial conditions with perturbation amplitude $\epsilon = 0.15$.

\subsection{2D Kolmogorov Flow}
\label{app:2dkf}

\begin{figure}[t]
\centering
\includegraphics[width=\columnwidth]{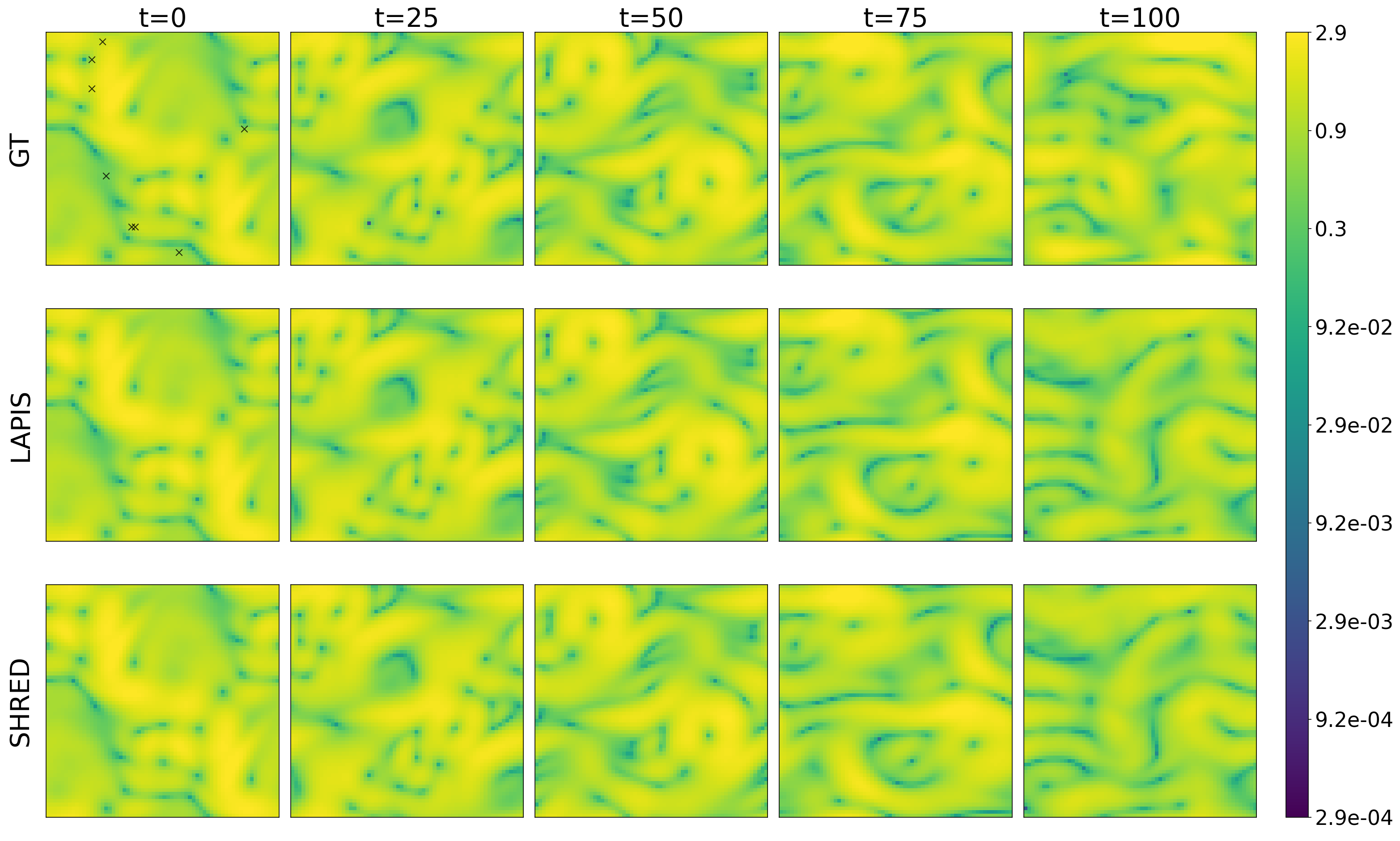}
\caption{LAPIS-SHRED backward reconstruction on 2D Kolmogorov flow: velocity field. Ground truth (top row), LAPIS-SHRED reconstruction from the terminal 10\% of the trajectory (second row), and SHRED baseline using the full sensor time-series (third row), shown at selected time steps.}
\label{fig:2dkf_velocity}
\end{figure}

The 2DKF data is generated using a pseudo-spectral ETDRK4 solver in
vorticity-streamfunction form on a $64 \times 64$ doubly-periodic domain
$[0, 2\pi)^2$ at $Re = 50$, with sinusoidal forcing $f = [0, \sin(4y)] + 0.1$.
After a burn-in period of $T_{\text{burnin}} = 10.0$, snapshots are saved
every 10 steps ($\Delta t_{\text{save}} = 0.1$) for $T_{\text{sim}} = 10.0$,
yielding 101 snapshots. $K = 15$ simulation trajectories are generated with perturbation amplitude
$\epsilon = 0.05$.

\subsection{2D von Karman Vortex Street}
\label{app:2dkvs}

\begin{figure}[t]
\centering
\includegraphics[width=\columnwidth]{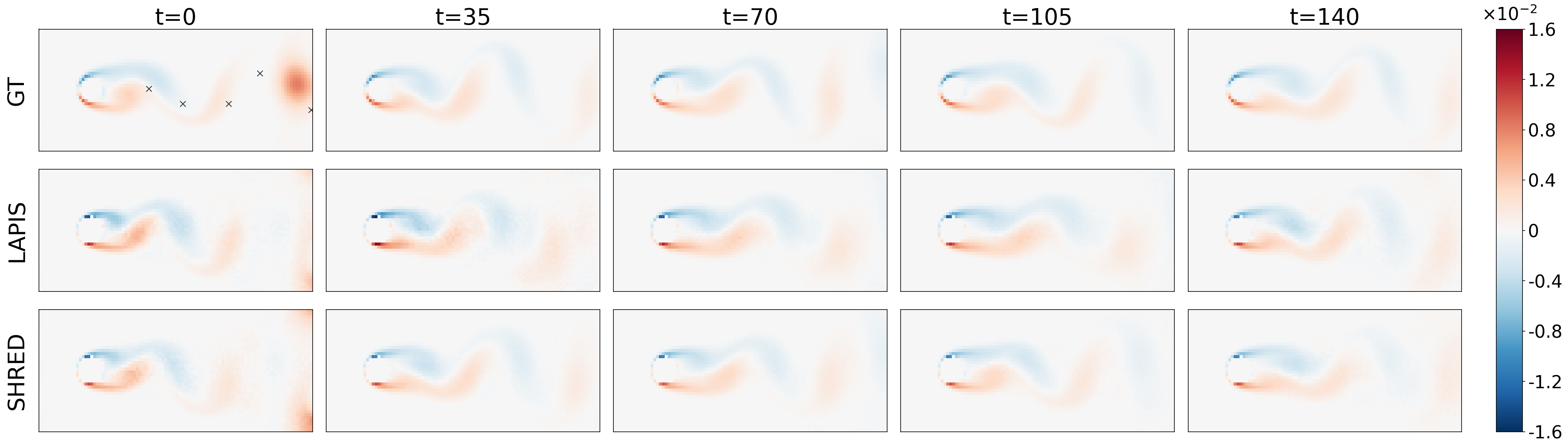}
\caption{LAPIS-SHRED backward reconstruction on the 2D von Karman vortex street. Ground truth (top row), LAPIS-SHRED reconstruction from the terminal $10\%$ of the trajectory (second row), and SHRED baseline using the full sensor time-series (third row), shown at selected time steps.}
\label{fig:2dkvs_backward}
\end{figure}

The 2D KVS data is generated using a D2Q9 lattice Boltzmann method on a
$400 \times 160$ lattice ($r = 16$ cylinder, Zou-He inlet, bounce-back walls).
The periodic vortex street converges to a limit cycle, so ensemble diversity
is achieved via parameter sweep: $K = 8$ simulations with randomized
$\mathrm{Re} \in [60, 130]$, $U_\infty \in [0.03, 0.05]$, inlet modulation,
and burn-in duration ($\tau > 0.535$ enforced). Ground truth uses
$\mathrm{Re} \approx 80.6$, $U_\infty \approx 0.036$. Vorticity snapshots
are saved every 100 steps over 14{,}000 steps ($T = 141$ frames),
downsampled to $100 \times 40$ via $4 \times 4$ block averaging.
$p = 5$ sensors; Seq2Seq SHRED mode ($d_h = 80$);
$\text{obs\_fraction} = 0.10$.

\subsection{High-Fidelity RDE}
\label{app:hf_rde}

The RDE dataset comprises 250 temporal snapshots on a 1D ring of $n = 100$
grid points, with $p = 25$ uniformly distributed sensors and lag $L = 25$.
The multiscale SHRED training follows~\cite{bao2026cheap2rich}: LF-SHRED
for 300 epochs on Koch's 1D model, GAN alignment for 400 epochs, HF-SHRED
with bandlimited sparsity for 500+200 epochs. Forward models use
$d_h = 64$, 2-layer BiLSTM, 3-layer MLP, $W = 25$; training is performed for 600 epochs with AdamW.

\subsection{1D RDE Ignition Stage}
\label{app:1drde}
The 1D RDE data is generated using Koch's model~\cite{koch2020modeling}. $K = 5$ simulation trajectories
span the ignition phase at $t \in [10.0, 20.0]$ ($T = 101$ frames) with spatial
resolution $m_x = 4800$ downsampled to $m_x^{\text{coarse}} = 48$. $p = 16$ sensors; Seq2Seq SHRED mode; backward model
with $\text{obs\_len} = 20$ (20\% tail window).

\subsection{2D NDSI}
\label{app:ndsi_data}
MODIS MOD10A1.061 and MYD10A1.061 daily NDSI products are retrieved via
Google Earth Engine for a $64 \times 64$ grid (500 m) centered on
(38.90$^\circ$N, 120.10$^\circ$W), Sierra Nevada, CA. Five simulation years
(2020--2024) and one ground truth year (2025) are processed. A Snow-Covered Area Fraction (SCAF) with parameters $\tau = 0.4$,
$\rho = 0.25$, $K_{\text{consec}} = 3$ trims each season to its active
snow-cover period. $p = 64$ stratified sensors; Seq2Seq SHRED mode.

\begin{figure}[t]
\centering
\includegraphics[width=\columnwidth]{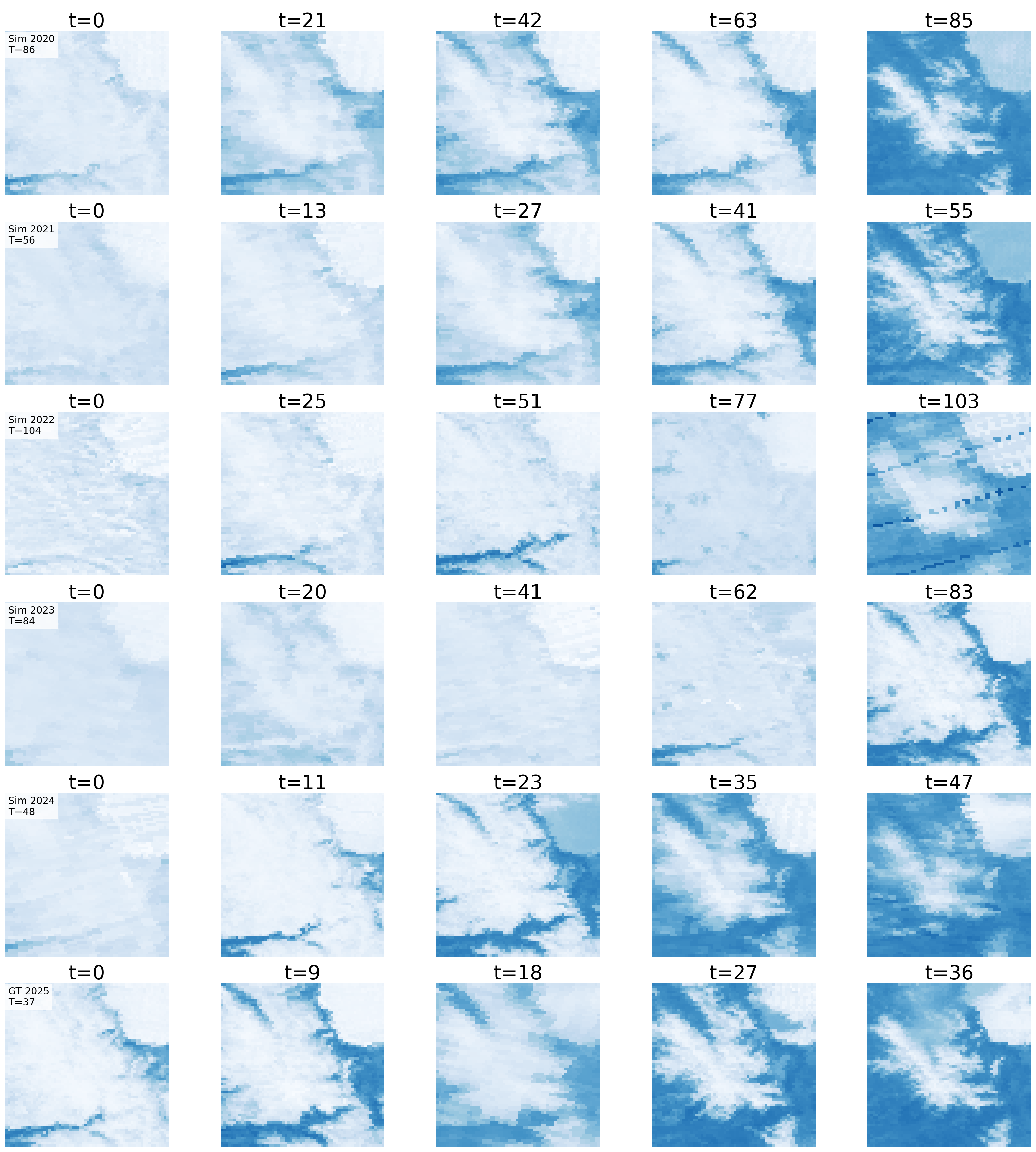}
\caption{NDSI data preview: spatiotemporal structure of the 2020-2025 snow melt seasons after SCAF trimming.}
\label{fig:ndsi_data}
\end{figure}


\section{Theoretical Analysis of LAPIS-SHRED}
\label{app:theoretical_foundations}

This appendix provides a theoretical analysis of the LAPIS-SHRED inference pipeline and develops a detailed case study for one specific and analytically tractable setting: backward inference from terminal observations in dissipative systems converging to stable equilibria. This case study, which encompasses the NDSI backward inference experiment in the main text, is chosen because it 
admits explicit bounds that clarify the roles of the simulation 
ensemble, the encoding strategy, and the temporal model capacity in 
controlling reconstruction accuracy. The general error decomposition (\cref{app:problem_setting}) and the assumptions therein are not restricted to this setting.

\subsection{Problem Setting and Error Decomposition}
\label{app:problem_setting}

Consider an autonomous dynamical system on a Banach space $X$ (e.g., a sufficiently regular Sobolev space $H^s(\Omega)$)~\cite{evans2022partial,robinson2003infinite}:
\begin{equation}
\label{eq:app_dynamics}
\frac{d\mathbf{Y}}{dt} = \mathcal{F}(\mathbf{Y}), \qquad \mathbf{Y}(0) = \mathbf{Y}_0 \in X,
\end{equation}
with flow map $\Phi_t : X \to X$ defined by $\Phi_t(\mathbf{Y}_0) = \mathbf{Y}(t)$. The terminal state is $\mathbf{Y}_T = \Phi_T(\mathbf{Y}_0)$. LAPIS-SHRED seeks to approximate the inverse flow map $\Phi_T^{-1}$ in a latent representation, recovering the trajectory $\{\mathbf{Y}(t)\}_{t \in [0,T]}$ from sparse observations of $\mathbf{Y}_T$.

The total reconstruction error of LAPIS-SHRED at time $t$ decomposes as:
\begin{equation}
\label{eq:error_decomposition}
\underbrace{\|\hat{\mathbf{Y}}(t) - \mathbf{Y}^{\mathrm{obs}}(t)\|}_{\text{total error}} \leq \underbrace{\varepsilon_{\mathrm{enc}}}_{\substack{\text{observation} \\ \text{encoding}}} + \underbrace{\varepsilon_{\mathrm{temp}}(t)}_{\substack{\text{temporal} \\ \text{model}}} + \underbrace{\varepsilon_{\mathrm{dec}}}_{\substack{\text{spatial} \\ \text{decoding}}} + \underbrace{\varepsilon_{\mathrm{sim}}}_{\substack{\text{sim-to-real} \\ \text{gap}}},
\end{equation}
where:
\begin{itemize}[nosep,leftmargin=1.5em]
\item $\varepsilon_{\mathrm{enc}}$ is the error in encoding the observed window (or terminal observation $\mathbf{s}_T^{\mathrm{obs}}$) into the latent space (\cref{sec:encoding});
\item $\varepsilon_{\mathrm{temp}}(t)$ is the error of the temporal dynamics model in recovering or predicting the latent state at time $t$ from the encoded observation window---this encompasses both the backward model $\mathcal{B}$ inferring earlier states and the forward autoregressive model $\mathcal{A}$ predicting later states
\item $\varepsilon_{\mathrm{dec}}$ is the reconstruction error of the frozen decoder mapping latent states to spatial fields;
\item $\varepsilon_{\mathrm{sim}}$ quantifies the distributional discrepancy between simulation training data and ground truth if no data assimilation methods are applied.
\end{itemize}

In the analysis that follows, we focus primarily on $\varepsilon_{\mathrm{enc}}$ and $\varepsilon_{\mathrm{temp}}(t)$, as these are the components most sensitive to the dynamical regime of the system. The terms $\varepsilon_{\mathrm{dec}}$ and $\varepsilon_{\mathrm{sim}}$ depend on the expressivity of the decoder network and the fidelity of the simulation ensemble, respectively, and are treated as controlled quantities.

\begin{assumption}[Well-posedness]
\label{ass:well_posed}
The forward dynamics \eqref{eq:app_dynamics} admit a unique solution $\mathbf{Y}(t) \in C([0,T]; X)$ for all initial conditions in a bounded set $\mathcal{U} \subset X$. The flow map $\Phi_t$ is continuously differentiable with respect to $\mathbf{Y}_0$ for $t \in [0,T]$.
\end{assumption}

\begin{assumption}[Latent representation fidelity]
\label{ass:latent_fidelity}
The SHRED temporal unit $\mathcal{E}_{\mathrm{SHRED}}$ and decoder $\mathcal{D}_{\mathrm{SHRED}}$ define a latent space $\mathcal{Z} \subseteq \mathbb{R}^{d_z}$ such that the composite map $\mathcal{D}_{\mathrm{SHRED}} \circ \mathcal{E}_{\mathrm{SHRED}}$ approximates the identity on the training manifold to accuracy $\varepsilon_{\mathrm{rec}} \ll 1$. That is, for trajectories in the training distribution, $\|\mathcal{D}_{\mathrm{SHRED}}(\mathcal{E}_{\mathrm{SHRED}}(\mathbf{Y}(t))) - \mathbf{Y}(t)\| \leq \varepsilon_{\mathrm{rec}}$ uniformly in $t$.
\end{assumption}

\subsection{Case Study---Dissipative Systems Converging to Stable Equilibria}
\label{app:class_equilibria}

The most natural setting for LAPIS-SHRED is the class of dissipative dynamical systems that converge to isolated stable fixed points. This class encompasses the NDSI snow-cover experiment studied in the main text as well as a broad range of physical processes.

\begin{definition}[Dissipative system with stable equilibrium]
\label{def:dissipative}
The system \eqref{eq:app_dynamics} is \textbf{dissipative with stable equilibrium} $\mathbf{Y}^* \in X$ if:
\begin{enumerate}[label=(\roman*),nosep]
\item There exists a bounded absorbing set $\mathcal{B} \subset X$ such that for every bounded set $U \subset X$, there exists $t_0(U)$ with $\Phi_t(U) \subset \mathcal{B}$ for all $t \geq t_0$.
\item $\mathcal{F}(\mathbf{Y}^*) = \mathbf{0}$, and the Fr\'{e}chet derivative $D\mathcal{F}|_{\mathbf{Y}^*}$ has spectral bound $s(D\mathcal{F}|_{\mathbf{Y}^*}) = \sup\{\mathrm{Re}(\lambda) : \lambda \in \sigma(D\mathcal{F}|_{\mathbf{Y}^*})\} = -\gamma < 0$, where $\gamma$ quantifies the decay rate of perturbations.
\end{enumerate}
\end{definition}

\subsubsection{Linear Systems}

We begin with the simplest case. If the dynamics are linear, $\mathcal{F}(\mathbf{Y}) = \mathcal{L}\mathbf{Y}$ with $\mathcal{L}$ a bounded linear operator satisfying $s(\mathcal{L}) = -\gamma < 0$, the flow map is $\Phi_t = e^{t\mathcal{L}}$ and the terminal state is $\mathbf{Y}_T = e^{T\mathcal{L}}\mathbf{Y}_0$. The inverse flow map is $\Phi_T^{-1} = e^{-T\mathcal{L}}$, which exists as a bounded operator on $X$. However, inverting a dissipative flow is \emph{anti-dissipative}: the operator $e^{-T\mathcal{L}}$ has spectral radius $e^{\gamma T}$, so small perturbations in $\mathbf{Y}_T$ are amplified exponentially in backward time.

\begin{proposition}[Backward sensitivity for linear dissipative systems]
\label{prop:linear_backward}
Let $\mathcal{F}(\mathbf{Y}) = \mathcal{L}\mathbf{Y}$ with $s(\mathcal{L}) = -\gamma < 0$. If the terminal state is observed with error $\|\delta \mathbf{Y}_T\| \leq \varepsilon$, then the error in the reconstructed state at time $t < T$ satisfies
\begin{equation}
\label{eq:linear_backward_error}
\|\delta \mathbf{Y}(t)\| = \|e^{-(T-t)\mathcal{L}} \delta \mathbf{Y}_T\| \leq e^{\gamma(T-t)} \varepsilon.
\end{equation}
\end{proposition}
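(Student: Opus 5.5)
The plan is to split the claim into an exact identity and a norm estimate. The identity comes from linearity alone; the estimate is where the spectral hypothesis $s(\mathcal{L})=-\gamma$ is used, through the preceding paragraph's assertion that the inverse flow is anti-dissipative at rate $\gamma$.

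\textbf{Step 1 (the identity).} Since $\mathcal{L}$ is bounded, $\{e^{\tau\mathcal{L}}\}_{\tau\in\mathbb{R}}$ is a uniformly continuous \emph{group}. In particular $\Phi_\tau=e^{\tau\mathcal{L}}$ is invertible with $\Phi_\tau^{-1}=e^{-\tau\mathcal{L}}$, as noted before the proposition. The exact state satisfies $\mathbf{Y}(t)=e^{-(T-t)\mathcal{L}}\mathbf{Y}_T$. The reconstruction from the perturbed terminal datum is $\hat{\mathbf{Y}}(t)=e^{-(T-t)\mathcal{L}}(\mathbf{Y}_T+\delta\mathbf{Y}_T)$. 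Subtracting and using linearity gives $\delta\mathbf{Y}(t)=e^{-(T-t)\mathcal{L}}\delta\mathbf{Y}_T$ exactly, which is the equality in \eqref{eq:linear_backward_error}. Assumption~\ref{ass:well_posed} is automatic here, so nothing else is needed for this step.

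\textbf{Step 2 (the estimate).} Set $\tau=T-t>0$. The submultiplicative bound gives
\[
\|\delta\mathbf{Y}(t)\|\le\|e^{-\tau\mathcal{L}}\|\,\|\delta\mathbf{Y}_T\|\le\|e^{-\tau\mathcal{L}}\|\,\varepsilon,
\]
so the whole claim reduces to $\|e^{-\tau\mathcal{L}}\|\le e^{\gamma\tau}$. I would obtain this from the spectral information in the order below.

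\begin{enumerate}
\item By the spectral mapping theorem for bounded operators, $\sigma(e^{-\tau\mathcal{L}})=e^{-\tau\sigma(\mathcal{L})}$.
\item The paper identifies the growth rate of the backward flow with $\gamma$, i.e.\ it takes the spectral radius $r(e^{-\tau\mathcal{L}})$ to be $e^{\gamma\tau}$, the value stated just before the proposition.
\item Gelfand's formula gives $\lim_{n\to\infty}\|e^{-n\tau\mathcal{L}}\|^{1/n}=e^{\gamma\tau}$. This yields the bound asymptotically, and exactly whenever the operator norm equals the spectral radius.
\item Equality holds when $\mathcal{L}$ is normal on a Hilbert space. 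It also holds in the scalar single-mode case $\mathcal{L}\mathbf{v}=-\gamma\mathbf{v}$, where $e^{-\tau\mathcal{L}}\mathbf{v}=e^{\gamma\tau}\mathbf{v}$. That case also shows the factor $e^{\gamma\tau}$ is attained, so the bound is sharp.
\end{enumerate}

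\textbf{Main obstacle.} For a general bounded operator $\|e^{-\tau\mathcal{L}}\|$ can exceed the spectral radius $r(e^{-\tau\mathcal{L}})$ by a transient constant. Moreover, the growth of the backward group is controlled by $-\inf\mathrm{Re}\,\sigma(\mathcal{L})$, which the spectral bound alone does not fix. So Step 2 genuinely depends on the paper's identification $r(e^{-\tau\mathcal{L}})=e^{\gamma\tau}$ together with the norm equalling the spectral radius. I would first try to make this rigorous directly under the stated hypothesis, by passing to an equivalent (Lyapunov-type) norm in which $\|e^{-\tau\mathcal{L}}\|$ is within any $\eta>0$ of the spectral radius. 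If that is all that can be proved, the fallback is to state that under the given hypothesis one obtains $e^{(\gamma+\eta)\tau}\varepsilon$ in that norm. The clean factor $e^{\gamma(T-t)}$ then holds exactly in the normal or single-mode setting, which is the sharp case the proposition is illustrating.
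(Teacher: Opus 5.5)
Your Step~1 is correct, and your Step~2 follows the same route the paper implicitly takes. The paper's only justification is the sentence before the proposition asserting that $e^{-T\mathcal{L}}$ has spectral radius $e^{\gamma T}$. The argument breaks at item~2 (and item~3, which inherits its value). It cannot be repaired, because that identification, and with it the proposition as stated, is false.

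The hypothesis $s(\mathcal{L})=-\gamma$ bounds the spectrum from the right, i.e.\ $\mathrm{Re}\,\lambda\le-\gamma$. By spectral mapping, $r(e^{-\tau\mathcal{L}})=\max_{\lambda\in\sigma(\mathcal{L})}e^{-\tau\,\mathrm{Re}\,\lambda}=e^{\gamma_{\max}\tau}$, where $\gamma_{\max}:=-\min\mathrm{Re}\,\sigma(\mathcal{L})\ge\gamma$. So backward growth is set by the fastest-decaying mode, not the slowest. Since $\|e^{-\tau\mathcal{L}}\|\ge r(e^{-\tau\mathcal{L}})\ge e^{\gamma\tau}$ always, $e^{\gamma\tau}$ is in fact a \emph{lower} bound on the worst-case amplification.

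\textbf{Counterexamples.} Take $\mathcal{L}=\mathrm{diag}(-\gamma,-2\gamma)$ on $\mathbb{R}^2$. It is self-adjoint with $s(\mathcal{L})=-\gamma$. The perturbation $\delta\mathbf{Y}_T=(0,\varepsilon)^\top$ gives $\|\delta\mathbf{Y}(t)\|=e^{2\gamma(T-t)}\varepsilon>e^{\gamma(T-t)}\varepsilon$. Even with all spectrum on the line $\mathrm{Re}\,z=-\gamma$, non-normality breaks the bound. For $\mathcal{L}=\begin{pmatrix}-\gamma&a\\0&-\gamma\end{pmatrix}$ one gets $e^{-\tau\mathcal{L}}=e^{\gamma\tau}\begin{pmatrix}1&-a\tau\\0&1\end{pmatrix}$, whose norm exceeds $e^{\gamma\tau}$ whenever $a\tau\neq0$.

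The diagonal example also refutes item~4. Normality does give $\|e^{-\tau\mathcal{L}}\|=r(e^{-\tau\mathcal{L}})$, but that radius is $e^{\gamma_{\max}\tau}$, not $e^{\gamma\tau}$. It refutes your fallback too: no equivalent-norm argument can yield $C_\eta e^{(\gamma+\eta)\tau}$ with $\eta<\gamma$ when the true growth is $e^{2\gamma\tau}$. Your ``Main obstacle'' paragraph had the right diagnosis, namely that growth is governed by $-\inf\mathrm{Re}\,\sigma(\mathcal{L})$, but the fallback then reverts to $\gamma$.

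\textbf{What is provable.}
\begin{itemize}
\item For each $\eta>0$ there is $C_\eta$ with $\|\delta\mathbf{Y}(t)\|\le C_\eta e^{(\gamma_{\max}+\eta)(T-t)}\varepsilon$. This is where your Lyapunov-norm idea belongs, applied to $s(-\mathcal{L})=\gamma_{\max}$. If $\mathcal{L}$ is normal on a Hilbert space, you can take $C_\eta=1$ and $\eta=0$.
\item A constant-free bound $e^{\omega(T-t)}\varepsilon$ holds whenever $\mathrm{Re}\langle\mathcal{L}x,x\rangle\ge-\omega\|x\|^2$: differentiate $\|e^{-\tau\mathcal{L}}x\|^2$ and apply Gronwall.
\item The stated factor $e^{\gamma(T-t)}$ is valid precisely when $\|e^{-\tau\mathcal{L}}\|=e^{\gamma\tau}$. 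An example is $\mathcal{L}=-\gamma I+K$ with $K$ skew-adjoint, which contains your single-mode case.
\end{itemize}

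This corrected rate is exactly what the paper itself uses in the next proposition, where amplification is $e^{\gamma_r(T-t)}$ with $\gamma_r$ the fastest-decaying mode. So the right fix is to change the statement, not to seek a sharper proof. Either replace $\gamma$ by $\gamma_{\max}$, or add the hypotheses that $\mathcal{L}$ is normal and $\mathrm{Re}\,\sigma(\mathcal{L})=\{-\gamma\}$.
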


\begin{remark}[Role of the simulation prior]
\label{rem:prior_regularization}
Proposition \ref{prop:linear_backward} reveals that backward inference from terminal observations is inherently ill-conditioned for dissipative systems: information about the initial state is exponentially attenuated in the forward dynamics. LAPIS-SHRED overcomes this through the simulation-informed prior. The backward model $\mathcal{B}$ does not invert the dynamics in full generality; rather, it learns the inverse restricted to the low-dimensional manifold of trajectories spanned by the training ensemble $\{\Phi_t(\mathbf{Y}_0^{(k)})\}_{k=1}^K$. On this manifold, the effective condition number is bounded by the ratio of largest to smallest singular values of the restricted flow map, which is controlled by the ensemble design (\cref{sec:ensemble}).
\end{remark}

\begin{proposition}[Error bound with simulation prior]
\label{prop:prior_bound}
Let the training ensemble span a subspace $V_r = \mathrm{span}\{\boldsymbol{\phi}_1, \ldots, \boldsymbol{\phi}_r\} \subset X$ of dimension $r$. Let $\Pi_r$ denote the orthogonal projection onto $V_r$. If $\mathcal{L}$ restricted to $V_r$ has eigenvalues $\{-\gamma_j\}_{j=1}^r$ with $0 < \gamma_1 \leq \gamma_2 \leq \cdots \leq \gamma_r$, then the backward reconstruction error on the training manifold satisfies
\begin{equation}
\label{eq:prior_backward_error}
\|\delta \mathbf{Y}(t)\|_{V_r} \leq e^{\gamma_r(T-t)} \|\Pi_r \delta \mathbf{Y}_T\| + \|(I - \Pi_r)\mathbf{Y}^{\mathrm{obs}}(t)\|.
\end{equation}
The first term is the amplified on-manifold error, governed by the fastest-decaying mode $\gamma_r$ (greatest information loss). The second term is the off-manifold component, controlled by the fidelity of the simulation ensemble in spanning the ground truth trajectory.
\end{proposition}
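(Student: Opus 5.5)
We split the reconstruction error with the projection $\Pi_r$ and bound the two pieces separately. Write the reconstructed state as $\hat{\mathbf{Y}}(t)$ and the ground truth as $\mathbf{Y}^{\mathrm{obs}}(t)$. Since the backward model $\mathcal{B}$ is trained only on trajectories lying in $V_r$, we model its output as taking values in $V_r$. Concretely, we take $\hat{\mathbf{Y}}(t) = e^{-(T-t)\mathcal{L}}\Pi_r \hat{\mathbf{Y}}_T$, the exact inverse flow restricted to the ensemble subspace applied to the encoded terminal state. Then
\[
\hat{\mathbf{Y}}(t) - \mathbf{Y}^{\mathrm{obs}}(t) = \bigl(\hat{\mathbf{Y}}(t) - \Pi_r \mathbf{Y}^{\mathrm{obs}}(t)\bigr) - (I-\Pi_r)\mathbf{Y}^{\mathrm{obs}}(t).
\]
The triangle inequality reduces the claim to bounding the on-manifold term by $e^{\gamma_r(T-t)}\|\Pi_r\delta\mathbf{Y}_T\|$. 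The off-manifold term already appears verbatim in the bound.

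For the on-manifold term we need $V_r$ to be invariant under $\mathcal{L}$. This is implicit in the phrase ``$\mathcal{L}$ restricted to $V_r$ has eigenvalues $\{-\gamma_j\}$''. We would also assume that $\mathcal{L}|_{V_r}$ is diagonalizable with an orthonormal eigenbasis $\boldsymbol{\phi}_j$, so that it is normal. Under invariance, $\Pi_r$ commutes with the flow on $V_r$, giving $\Pi_r\mathbf{Y}^{\mathrm{obs}}(t) = e^{-(T-t)\mathcal{L}}\Pi_r\mathbf{Y}^{\mathrm{obs}}_T$. The on-manifold error is then $e^{-(T-t)\mathcal{L}|_{V_r}}\Pi_r\delta\mathbf{Y}_T$, where $\delta\mathbf{Y}_T = \hat{\mathbf{Y}}_T - \mathbf{Y}^{\mathrm{obs}}_T$. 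Expanding in the eigenbasis gives
\[
\Bigl\|\sum_j e^{\gamma_j(T-t)} c_j \boldsymbol{\phi}_j\Bigr\| \le e^{\gamma_r(T-t)}\Bigl(\sum_j |c_j|^2\Bigr)^{1/2} = e^{\gamma_r(T-t)}\|\Pi_r\delta\mathbf{Y}_T\|,
\]
with $c_j$ the coefficients of $\Pi_r\delta\mathbf{Y}_T$. The step above uses $\gamma_j \le \gamma_r$. This mirrors Proposition~\ref{prop:linear_backward}, with the spectral bound replaced by the largest decay rate on $V_r$.

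The main obstacle is this last step. If $\mathcal{L}|_{V_r}$ is not normal, the operator norm of $e^{-(T-t)\mathcal{L}|_{V_r}}$ is only bounded by $\kappa(P)\,e^{\gamma_r(T-t)}$, where $\kappa(P)$ is the condition number of the eigenvector matrix. Jordan blocks would add polynomial factors on top of that. We would therefore state the orthonormal-eigenbasis assumption explicitly, e.g. self-adjoint $\mathcal{L}$ or POD modes that are eigenmodes. Alternatively, we would carry the constant $\kappa(P)$ and absorb it into the bound, as the remark on the effective condition number suggests. A secondary point is to interpret $\|\cdot\|_{V_r}$ as the full error norm after the split above, and to note that the off-manifold term is not amplified. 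It is not amplified because $\hat{\mathbf{Y}}$ has no off-manifold component, so that part of the error is just the ground truth's own residual outside $V_r$.
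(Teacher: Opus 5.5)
The paper states this proposition without proof; it is followed only by interpretation. Your argument therefore has to stand on its own, and it has a genuine gap.

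Its overall shape is the natural reading and matches Remark~\ref{rem:prior_regularization}: reconstruction as the inverse flow restricted to $V_r$, the $\Pi_r$ split, a spectral bound on $V_r$, and an unamplified off-manifold residual. You are also right that without an orthonormal eigenbasis a factor $\kappa(P)$ (or Jordan polynomial factors) appears.

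The gap is earlier, in the step ``under invariance, $\Pi_r$ commutes with the flow, giving $\Pi_r\mathbf{Y}^{\mathrm{obs}}(t)=e^{-(T-t)\mathcal{L}}\Pi_r\mathbf{Y}^{\mathrm{obs}}_T$.'' Invariance of $V_r$ yields only $\mathcal{L}\Pi_r=\Pi_r\mathcal{L}\Pi_r$, not $\Pi_r\mathcal{L}=\mathcal{L}\Pi_r$. The block $\Pi_r\mathcal{L}(I-\Pi_r)$ may be nonzero, and it feeds the off-manifold part of the truth into $V_r$. Variation of constants gives
\[
\Pi_r\mathbf{Y}^{\mathrm{obs}}(t)=e^{-(T-t)\mathcal{L}}\Pi_r\mathbf{Y}^{\mathrm{obs}}_T-\int_t^T e^{-(s-t)\mathcal{L}}\,\Pi_r\mathcal{L}(I-\Pi_r)\mathbf{Y}^{\mathrm{obs}}(s)\,ds .
\]
So your on-manifold error is $e^{-(T-t)\mathcal{L}}\Pi_r\delta\mathbf{Y}_T$ plus this integral, and neither term of the bound controls the integral.

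Here is a concrete counterexample. Take $X=\mathbb{R}^2$, $V_1=\mathrm{span}(\mathbf{e}_1)$, $\mathcal{L}=\begin{pmatrix}-1&\beta\\0&-2\end{pmatrix}$, truth $\mathbf{Y}^{\mathrm{obs}}(0)=\mathbf{e}_2$, and exact terminal data $\delta\mathbf{Y}_T=0$. Then $\mathbf{e}_1$ is an eigenvector, $V_1$ is invariant, and $\mathcal{L}|_{V_1}=-1$ is trivially normal, so all your hypotheses hold. One finds $\mathbf{Y}^{\mathrm{obs}}(t)=\bigl(\beta(e^{-t}-e^{-2t}),\,e^{-2t}\bigr)$ and $\hat{\mathbf{Y}}(t)=\beta(e^{-t}-e^{-T-t})\,\mathbf{e}_1$. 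The error's first component is $\beta(e^{-2t}-e^{-T-t})\neq 0$ for $t<T$. At $t=0$ the error norm is $\sqrt{\beta^2(1-e^{-T})^2+1}$, which exceeds the right-hand side value $1$ for any $\beta\neq 0$ and is unbounded as $|\beta|\to\infty$.

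The missing hypothesis is that $V_r$ reduces $\mathcal{L}$, i.e.\ $\Pi_r\mathcal{L}=\mathcal{L}\Pi_r$, or equivalently that $V_r^\perp$ is also invariant. This holds when $\mathcal{L}$ is self-adjoint or normal on all of $X$ and the $\boldsymbol{\phi}_j$ are eigenvectors, so your self-adjoint option works. ``POD modes that are eigenmodes'' of a non-normal $\mathcal{L}$ does not, as the example shows, and normality of $\mathcal{L}|_{V_r}$ alone does not address this issue.

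Under the reducing hypothesis the rest of your argument goes through unchanged. Because the two error pieces are orthogonal, you even get the sharper Pythagorean form. Without it, a correct version of the bound must carry an extra coupling term. When $\mathcal{L}|_{V_r}$ has an orthonormal eigenbasis, that term is at most $\|\Pi_r\mathcal{L}(I-\Pi_r)\|\int_t^T e^{\gamma_r(s-t)}\|(I-\Pi_r)\mathbf{Y}^{\mathrm{obs}}(s)\|\,ds$.
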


This decomposition reveals a fundamental bias--variance tradeoff in the backward reconstruction: increasing the subspace dimension $r$ reduces the off-manifold approximation error $\|(I - \Pi_r)\mathbf{Y}^{\mathrm{obs}}(t)\|$ but admits faster-decaying modes with larger $\gamma_r$, thereby amplifying the on-manifold error. This suggests an optimal truncation rank that balances fidelity of the simulation prior against sensitivity to terminal observation noise.

\subsubsection{Nonlinear Dissipative Systems}

For the nonlinear case, write $\mathbf{Y}(t) = \mathbf{Y}^* + \mathbf{v}(t)$ where $\mathbf{v}$ is the perturbation from equilibrium. By Taylor expansion of $\mathcal{F}$:
\begin{equation}
\label{eq:perturbation_dynamics}
\frac{d\mathbf{v}}{dt} = \mathcal{L}^* \mathbf{v} + \mathcal{N}_2(\mathbf{v}, \mathbf{v}) + \mathcal{O}(\|\mathbf{v}\|^3),
\end{equation}
where $\mathcal{L}^* = D\mathcal{F}|_{\mathbf{Y}^*}$ and $\mathcal{N}_2(\mathbf{v}, \mathbf{v}) = \tfrac{1}{2} D^2\mathcal{F}|_{\mathbf{Y}^*}(\mathbf{v}, \mathbf{v})$.


\begin{theorem}[LAPIS-SHRED error bound for nonlinear dissipative systems]
\label{thm:nonlinear_dissipative}
Let the system \eqref{eq:app_dynamics} satisfy \ref{def:dissipative} with spectral gap $\gamma > 0$, and let $\mathcal{L}^* = D\mathcal{F}|_{\mathbf{Y}^*}$ denote the linearization at equilibrium. Let $\kappa = \|\mathcal{N}_2\|$ denote the operator norm of the quadratic nonlinearity. Define the convergence time $T_c = \gamma^{-1}\log(\|\mathbf{v}(0)\|/\delta_0)$ for a threshold $\delta_0 > 0$ satisfying $2\kappa \delta_0 < \gamma$, and suppose $T > T_c$ so that $\|\mathbf{v}(T)\| \leq \delta_0$.

Assume:
\begin{enumerate}[label=(\roman*),nosep]
\item The SHRED temporal unit $\mathcal{E}_{\mathrm{SHRED}}$ is Lipschitz continuous with constant $L_{\mathcal{E}}$;
\item The SHRED decoder $\mathcal{D}_{\mathrm{SHRED}}$ is Lipschitz continuous with constant $L_{\mathcal{D}}$;
\item The learned backward model $\mathcal{B}(\cdot\,,t) : \mathcal{Z} \to \mathcal{Z}$ is Lipschitz continuous in its first argument with constant $L_{\mathcal{B}}(T-t)$ for each $t \in [0,T]$;
\item The backward model achieves latent-space accuracy $\varepsilon_{\mathrm{bwd}}$ on the training distribution, i.e., $\sup_{t \in [0,T]} \|\mathcal{B}(\mathbf{z}_T^{\mathrm{true}}, t) - \mathbf{z}_t^{\mathrm{true}}\| \leq \varepsilon_{\mathrm{bwd}}$ for trajectories in the training manifold;
\item The simulation--reality gap satisfies $\|(I - \Pi_r)\mathbf{Y}^{\mathrm{obs}}(t)\| \leq \varepsilon_{\mathrm{sim}}$ uniformly in $t$.
\end{enumerate}

Then the spatial reconstruction error satisfies, for all $t \in [0, T]$:
\begin{equation}
\label{eq:nonlinear_dissipative_bound}
\|\hat{\mathbf{Y}}(t) - \mathbf{Y}^{\mathrm{obs}}(t)\| \leq L_{\mathcal{D}} \left[ L_{\mathcal{B}}(T-t) \, \varepsilon_{\mathrm{enc}} + \varepsilon_{\mathrm{bwd}} + \varepsilon_{\mathrm{sim}} \right],
\end{equation}
where $\varepsilon_{\mathrm{enc}} \leq L_{\mathcal{E}} \cdot (\|\mathcal{L}^*\|\delta_0 + \kappa\delta_0^2) \cdot L\Delta t$ is the terminal encoding error, with $L$ the padding length and $\Delta t$ the time step.
\end{theorem}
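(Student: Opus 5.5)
The bound will come from a triangle-inequality telescoping in latent space, pushed through the frozen decoder, with the terminal encoding error controlled separately by the near-stationarity of the terminal state.

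\textbf{Reference points.} Write $\mathbf{z}_T^{\mathrm{obs}} = \mathcal{E}_{\mathrm{SHRED}}(\tilde{\mathbf{S}}_{\mathrm{pad}})_{[-1]}$ for the padded terminal encoding from \cref{sec:encoding}. Write $\mathbf{z}_T^{\mathrm{true}}$ for the latent code the temporal unit would produce from the true terminal sensor history, and $\mathbf{z}_t^{\mathrm{true}}$ for the true latent trajectory. Let $\Pi_r\mathbf{Y}^{\mathrm{obs}}(t)$ denote the on-manifold projection of the observed trajectory.

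\textbf{Splitting the error.} Since $\hat{\mathbf{Y}}(t)=\mathcal{D}_{\mathrm{SHRED}}(\mathcal{B}(\mathbf{z}_T^{\mathrm{obs}},t))$, I insert the intermediate quantities $\mathcal{D}_{\mathrm{SHRED}}(\mathcal{B}(\mathbf{z}_T^{\mathrm{true}},t))$ and $\mathcal{D}_{\mathrm{SHRED}}(\mathbf{z}_t^{\mathrm{true}})$. This gives three pieces:
\begin{enumerate}[label=(\alph*),nosep]
\item Assumptions (ii) and (iii) give $\|\mathcal{D}(\mathcal{B}(\mathbf{z}_T^{\mathrm{obs}},t))-\mathcal{D}(\mathcal{B}(\mathbf{z}_T^{\mathrm{true}},t))\|\le L_{\mathcal{D}}L_{\mathcal{B}}(T-t)\,\|\mathbf{z}_T^{\mathrm{obs}}-\mathbf{z}_T^{\mathrm{true}}\|$. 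The last factor is exactly $\varepsilon_{\mathrm{enc}}$.
\item Assumptions (ii) and (iv) bound the next difference by $L_{\mathcal{D}}\varepsilon_{\mathrm{bwd}}$.
\item The last piece, $\|\mathcal{D}(\mathbf{z}_t^{\mathrm{true}})-\mathbf{Y}^{\mathrm{obs}}(t)\|$, splits into an on-manifold decoder term and the off-manifold residual $\|(I-\Pi_r)\mathbf{Y}^{\mathrm{obs}}(t)\|\le\varepsilon_{\mathrm{sim}}$ from assumption (v).
\end{enumerate}

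\textbf{Absorbing the decoder term.} To match the stated form, I absorb the on-manifold decoder error (of size $\varepsilon_{\mathrm{rec}}$ by Assumption~\ref{ass:latent_fidelity}) into $\varepsilon_{\mathrm{bwd}}$. I then bound the off-manifold term by $L_{\mathcal{D}}\varepsilon_{\mathrm{sim}}$, either by treating it as a latent-space discrepancy mapped through $\mathcal{D}$ or by simply taking $L_{\mathcal{D}}\ge 1$. This factoring-out of $L_{\mathcal{D}}$ is somewhat loose, and I would state the convention explicitly.

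\textbf{Encoding error.} The true terminal input window is $\mathbf{s}_{T-L+1},\dots,\mathbf{s}_T$; the padded one is $L$ copies of $\mathbf{s}_T$. By assumption (i), $\varepsilon_{\mathrm{enc}}\le L_{\mathcal{E}}\max_{0\le k<L}\|\mathbf{s}_{T-k}-\mathbf{s}_T\|$, absorbing the sampling operator norm $\|\mathbf{M}\|\le1$ for pointwise sensors. For each $k$, $\mathbf{Y}(T)-\mathbf{Y}(T-k\Delta t)=\int_{T-k\Delta t}^{T}\dot{\mathbf{v}}\,ds$. Using \eqref{eq:perturbation_dynamics}, $\|\dot{\mathbf{v}}\|\le\|\mathcal{L}^*\|\|\mathbf{v}\|+\kappa\|\mathbf{v}\|^2$ up to cubic terms. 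It therefore suffices to show $\|\mathbf{v}(s)\|\le\delta_0$ throughout the window, which gives the bound $(\|\mathcal{L}^*\|\delta_0+\kappa\delta_0^2)L\Delta t$.

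\textbf{Main obstacle.} The delicate step is justifying $\|\mathbf{v}(s)\|\le\delta_0$ on $[T-L\Delta t,T]$ rather than only at $T$. I would first run a Gr\"onwall/energy argument. The condition $2\kappa\delta_0<\gamma$ makes the ball of radius $\delta_0$ forward invariant, since $\frac{d}{dt}\|\mathbf{v}\|\le(-\gamma+\kappa\|\mathbf{v}\|)\|\mathbf{v}\|<-\tfrac{\gamma}{2}\|\mathbf{v}\|$ there, in a norm adapted to $\mathcal{L}^*$. I would also show the solution decays like $e^{-\gamma t/2}\|\mathbf{v}(0)\|$ before entering the ball. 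Together these give that $T-L\Delta t\ge T_c$ suffices.

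Two issues remain. First, the actual entry time is about $2T_c$ because of the $\gamma/2$ rate, so I would either redefine $T_c$ with the effective rate or assume $L\Delta t\le T-T_c$ and accept a constant factor. Second, the cubic remainder must be neglected or absorbed, which needs $\delta_0$ small enough.
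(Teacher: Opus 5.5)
Your proposal follows the paper's own three-stage argument. Lipschitz continuity of $\mathcal{B}(\cdot,t)$ turns $\varepsilon_{\mathrm{enc}}$ into $L_{\mathcal{B}}(T-t)\,\varepsilon_{\mathrm{enc}}$, and the uniform accuracy assumption gives $\varepsilon_{\mathrm{bwd}}$. The decoder step uses $L_{\mathcal{D}}$, absorbs $\varepsilon_{\mathrm{rec}}$ into $\varepsilon_{\mathrm{bwd}}$, and places $\varepsilon_{\mathrm{sim}}$ inside the bracket by the convention you describe; the paper simply adds it to the latent error before multiplying by $L_{\mathcal{D}}$. The encoding bound comes from near-stationarity of the padded window together with $L_{\mathcal{E}}$. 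Your triangle inequality through $\mathcal{D}(\mathcal{B}(\mathbf{z}_T^{\mathrm{true}},t))$ and $\mathcal{D}(\mathbf{z}_t^{\mathrm{true}})$ is the same computation written in state space, and it makes the looseness of putting $\varepsilon_{\mathrm{sim}}$ under $L_{\mathcal{D}}$ more visible.

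The step you flag as the main obstacle is one the paper never addresses. Its proof bounds $\|d\mathbf{Y}/dt\|$ only at $t=T$ and then applies that bound at every step of the padding window. It also silently drops the cubic remainder and the sampling operator $\mathbf{M}$. Your forward-invariance argument for the $\delta_0$-ball is the right way to close this. Note that an adapted norm gives rate $\gamma-\epsilon$ rather than $\gamma$, and it introduces norm-equivalence constants into both the invariance and the smallness condition.

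The other half of your plan, showing decay like $e^{-\gamma t/2}\|\mathbf{v}(0)\|$ before the trajectory enters the ball, cannot be derived from Definition~\ref{def:dissipative}. That definition gives only an absorbing set and local linear stability at $\mathbf{Y}^*$. Your inequality $\frac{d}{dt}\|\mathbf{v}\|\le(-\gamma+\kappa\|\mathbf{v}\|)\|\mathbf{v}\|$ rests on the quadratic expansion with the cubic term dropped, which is only local. Even then it yields rate $\gamma/2$ only while $\|\mathbf{v}\|<\gamma/(2\kappa)$. For larger $\|\mathbf{v}(0)\|$ there is no rate at all, and convergence to $\mathbf{Y}^*$ is not even guaranteed.

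So drop that step and take your second option. Read the theorem's premise ``$T>T_c$ so that $\|\mathbf{v}(T)\|\le\delta_0$'' as saying that $T_c$ is the entry time into the $\delta_0$-ball, and require $T-L\Delta t\ge T_c$. Forward invariance then gives $\|\mathbf{v}(s)\|\le\delta_0$, up to the norm constants, on the whole padding window. Your integral estimate then yields the stated bound on $\varepsilon_{\mathrm{enc}}$.
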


\begin{proof}[Proof sketch]
The error analysis proceeds in three stages.

\medskip
\noindent\textbf{Stage 1: Terminal encoding error.}
In the terminal regime ($t$ near $T$), the perturbation satisfies $\|\mathbf{v}(T)\| \leq \delta_0$. By the perturbation dynamics \eqref{eq:perturbation_dynamics}, the rate of change of the state is bounded:
\begin{equation}
\begin{aligned}
\left\|\frac{d\mathbf{Y}}{dt}\bigg|_{t=T}\right\| &= \|\mathcal{L}^*\mathbf{v}(T) + \mathcal{N}_2(\mathbf{v}(T),\mathbf{v}(T)) + \mathcal{O}(\|\mathbf{v}\|^3)\| \\ &\leq \|\mathcal{L}^*\|\delta_0 + \kappa\delta_0^2.
\end{aligned}
\end{equation}
Static padding replicates the terminal snapshot over $L$ frames. Since the true trajectory varies by at most $(\|\mathcal{L}^*\|\delta_0 + \kappa\delta_0^2)\Delta t$ per time step, the maximum discrepancy between the padded input and the true temporal input over the padding window is $(\|\mathcal{L}^*\|\delta_0 + \kappa\delta_0^2) \cdot L\Delta t$. By Lipschitz continuity of the temporal unit:
\begin{equation}
\varepsilon_{\mathrm{enc}} := \|\mathbf{z}_T^{\mathrm{obs}} - \mathbf{z}_T^{\mathrm{true}}\| \leq L_{\mathcal{E}} \cdot (\|\mathcal{L}^*\|\delta_0 + \kappa\delta_0^2) \cdot L\Delta t.
\end{equation}

\medskip
\noindent\textbf{Stage 2: Backward propagation error.}
Let $\mathbf{e}(t) := \hat{\mathbf{z}}_t - \mathbf{z}_t^{\mathrm{true}}$ denote the latent-space error. We decompose $\|\mathbf{e}(t)\|$ into two additive contributions.

\emph{(a) Input sensitivity.}
The backward model receives $\mathbf{z}_T^{\mathrm{obs}}$ as input. By Lipschitz continuity of $\mathcal{B}(\cdot\,,t)$ with constant $L_{\mathcal{B}}(T-t)$:
\begin{equation}
\|\mathcal{B}(\mathbf{z}_T^{\mathrm{obs}}, t) - \mathcal{B}(\mathbf{z}_T^{\mathrm{true}}, t)\| \leq L_{\mathcal{B}}(T-t) \cdot \varepsilon_{\mathrm{enc}}.
\end{equation}

\emph{(b) Approximation error.}
Even with exact terminal input, the learned model does not perfectly reproduce the true latent trajectory. By assumption~(iv), this contributes $\varepsilon_{\mathrm{bwd}}$ uniformly in $t$, absorbing all sources of model deficiency including finite capacity and imperfect learning of the nonlinear dynamics.

Adding the simulation--reality gap $\varepsilon_{\mathrm{sim}}$ (the off-manifold component from \cref{prop:prior_bound}):
\begin{equation}
\|\mathbf{e}(t)\| \leq L_{\mathcal{B}}(T-t) \, \varepsilon_{\mathrm{enc}} + \varepsilon_{\mathrm{bwd}} + \varepsilon_{\mathrm{sim}}.
\end{equation}

\medskip
\noindent\textbf{Stage 3: Decoding.}
The frozen SHRED decoder $\mathcal{D}_{\mathrm{SHRED}}$ maps latent states to spatial fields. By Lipschitz continuity with constant $L_{\mathcal{D}}$:
\begin{equation}
\|\hat{\mathbf{Y}}(t) - \mathbf{Y}^{\mathrm{obs}}(t)\| \leq L_{\mathcal{D}} \|\mathbf{e}(t)\| + \varepsilon_{\mathrm{rec}},
\end{equation}
where $\varepsilon_{\mathrm{rec}}$ is the composite reconstruction error of the temporal unit--decoder pair from \cref{ass:latent_fidelity} (absorbed into $\varepsilon_{\mathrm{bwd}}$ for notational simplicity). Substituting the bound from Stage~2 and factoring out $L_{\mathcal{D}}$ yields \eqref{eq:nonlinear_dissipative_bound}.
\end{proof}

\begin{remark}[Backward sensitivity and the role of the simulation prior]
\label{rem:practical_implications}
The bound \eqref{eq:nonlinear_dissipative_bound} is governed by the Lipschitz constant $L_{\mathcal{B}}(T-t)$ of the learned backward model, which controls how terminal encoding errors are amplified into reconstruction errors at earlier times. For an \emph{unconstrained} backward map inverting the full dynamics, the sensitivity would scale as $e^{\gamma_{\mathrm{eff}}(T-t)}$ with $\gamma_{\mathrm{eff}} = \gamma - 2\kappa\delta_0$: the dissipative nonlinearity reduces the effective amplification rate below the linear rate $\gamma$, since it enhances forward contraction and thereby reduces backward sensitivity. This follows from differencing the perturbation dynamics \eqref{eq:perturbation_dynamics} and noting that the bilinear term $2\mathcal{N}_2(\mathbf{v}, \delta\mathbf{v})$ strengthens the forward decay rate within $\|\mathbf{v}\| \leq \delta_0$.

In practice, the learned backward model can achieve $L_{\mathcal{B}}(T-t)$ significantly smaller than $e^{\gamma_{\mathrm{eff}}(T-t)}$ by exploiting the structure of the training ensemble: $\mathcal{B}$ need only invert the flow on the low-dimensional manifold spanned by the simulation trajectories, not in the full state space. The simulation prior thus plays a dual role---it regularizes the ill-conditioned backward inference by restricting the solution to a physically plausible manifold, and it reduces the effective sensitivity of the backward map to terminal perturbations. For snow-melt dynamics, $\gamma$ is determined by the thermal dissipation rate (solar insolation gradients, elevation-dependent temperature forcing, etc.), and $T_c$ corresponds to the duration of the active melt season; the exponential factor $e^{\gamma_{\mathrm{eff}}(T-t)}$ penalizes reconstruction of early-time dynamics ($t \ll T$) where information has been dissipated, while the simulation prior prevents this amplification from driving
the reconstruction outside the training manifold.
\end{remark}

\end{document}